\documentclass[10pt]{article} 
\usepackage[preprint]{tmlr}


\usepackage{amsmath,amsfonts,bm}









\def\eqref#1{equation~\ref{#1}}









\def\1{\bm{1}}










\DeclareMathAlphabet{\mathsfit}{\encodingdefault}{\sfdefault}{m}{sl}
\SetMathAlphabet{\mathsfit}{bold}{\encodingdefault}{\sfdefault}{bx}{n}











\newcommand{\R}{\mathbb{R}}



\DeclareMathOperator*{\argmin}{arg\,min}

\DeclareMathOperator{\Tr}{Tr}

\usepackage{hyperref}
\usepackage{url}
\usepackage{hyperref}
\usepackage{siunitx}
\usepackage{amssymb}
\usepackage{subcaption}
\usepackage{graphicx}
\usepackage{algorithm,algpseudocode,multirow,hhline}
\usepackage{booktabs,multirow}
\usepackage{url}
\usepackage{amsthm}

\newtheorem{theorem}{Theorem}
\newtheorem{lemma}{Lemma}
\newtheorem{remark}{Remark}

\title{Tackling Feature and Sample Heterogeneity in Decentralized Multi-Task Learning: A Sheaf-Theoretic Approach}


\author{\name Chaouki Ben Issaid \email chaouki.benissaid@oulu.fi \\
      \addr Centre for Wireless Communications\\
      University of Oulu, Finland
      \AND
      \name Praneeth Vepakomma \email vepakom@mit.edu\\
      \addr MBZUAI and Massachusetts Institute of Technology
      \AND
      \name Mehdi Bennis \email mehdi.bennis@oulu.fi\\
      \addr Centre for Wireless Communications\\
      University of Oulu, Finland}


\newcommand{{\ours}}{\texttt{Sheaf-FMTL}}
\newcommand{\F}{\mathcal{F}}
\newcommand{\G}{\mathcal{G}}


\begin{document}

\maketitle

\begin{abstract}
Federated multi-task learning (FMTL) aims to simultaneously learn multiple related tasks across clients without sharing sensitive raw data. However, in the decentralized setting, existing FMTL frameworks are limited in their ability to capture complex task relationships and handle feature and sample heterogeneity across clients. To address these challenges, we introduce a novel sheaf-theoretic-based approach for FMTL. By representing client relationships using cellular sheaves, our framework can flexibly model interactions between heterogeneous client models. We formulate the sheaf-based FMTL optimization problem using sheaf Laplacian regularization and propose the {\ours} algorithm to solve it. We show that the proposed framework provides a unified view encompassing many existing federated learning (FL) and FMTL approaches. Furthermore, we prove that our proposed algorithm, {\ours}, achieves a sublinear convergence rate in line with state-of-the-art decentralized FMTL algorithms. Extensive experiments show that although {\ours} introduces computational and storage overhead due to the management of interaction maps, it achieves substantial communication savings in terms of transmitted bits when compared to decentralized FMTL baselines. This trade-off makes {\ours} especially suitable for cross-silo FL scenarios, where managing model heterogeneity and ensuring communication efficiency are essential, and where clients have adequate computational resources.
\end{abstract}

\section{Introduction} \label{intro}
The growing demand for privacy-preserving distributed learning algorithms has steered the research community towards federated learning (FL) \citep{mcmahan2017communication}, a learning paradigm that allows several clients, such as mobile devices or organizations, to cooperatively train a model without revealing their raw data. By aggregating locally computed updates rather than raw data, FL aims to learn a global model that benefits from the different data distributions inherently present across the participating clients. Despite its promise, conventional FL faces significant hurdles when dealing with client data heterogeneity. In fact, while the global model may perform well on average, the statistically heterogeneous clients' data have been shown to affect the model's existence and convergence \citep{sattler2020clustered, Li2020}. These challenges are exacerbated in a decentralized environment where coordination is limited and direct control over the client models is not feasible. Recently, there have been several attempts to bring personalization into FL to learn distinct local models \citep{wang2019federated, fallah2020personalized, hanzely2020federated} since learning a personalized model per client is more suitable than a single global model to tackle data heterogeneity. These models are specifically learned to fit the heterogeneous local data distribution via techniques such as federated multi-task learning (FMTL) \citep{smith2017federated, dinh2022new} that model the interactions between the different personalized local models.

FMTL generalizes the FL framework by allowing the simultaneous learning of multiple related but distinct tasks across several clients. Unlike traditional FL, which focuses on training a single global model, FMTL acknowledges that different clients may be interested in solving distinct tasks that are related but not identical. By leveraging task relatedness, FMTL aims to improve the performance of individual task models through shared knowledge while maintaining task-specific uniqueness. This approach not only enhances the generalization performance of the models on individual tasks by leveraging shared information but also contributes to tackling the non-independent and identically distributed (non-IID) nature of data across clients. FMTL considers the different objectives and data distributions across clients, customizing models to perform optimally on each task while still benefiting from the federated structure of the problem as well as the similarity between these tasks.  However, existing FMTL frameworks are not without limitations. A major concern is the oversimplified view of task interdependencies, where relationships between tasks are often modeled using simple fixed scalar weights. This approach captures only the basic notion of task relatedness but fails to represent more intricate and higher-order dependencies that may exist among tasks. As a result, these models may overlook subtle interconnections and dynamic patterns of interdependence, leading to suboptimal knowledge sharing and reduced performance in heterogeneous and decentralized environments. For a comprehensive understanding of scenarios where task similarities are naturally defined in vector spaces, kindly refer to Appendix \ref{appendix:vector_space_scenarios}.  Furthermore, a critical issue with the current FMTL frameworks is the assumption that the models have the same size, which limits the applicability of FMTL in the case of different model sizes. These challenges are particularly pronounced in cross-silo FL scenarios, where collaborating organizations may possess large, complex datasets with significant heterogeneity and may use different model architectures, necessitating a flexible and robust learning framework. Last but not least, to the best of our knowledge, apart from MOCHA \citep{smith2017federated}, which requires the presence of a server, the interactions between the tasks are assumed to be known and not learned during training. Therefore, to address these challenges, we seek to answer the following question:

\textbf{``How can we effectively model and learn the complex interactions between various tasks/models in an FMTL decentralized setting, even in the presence of different model sizes?"}

To answer this question, the concept of sheaves provides a novel lens through which the interactions between clients in a decentralized FMTL setting can be modeled. The mathematical notion of a sheaf initially invented and developed in algebraic topology, is a framework that systematically organizes local observations in a way that allows one to make conclusions about the global consistency of such observations \citep{robinson2014topological, robinson2013understanding, riess2022diffusion}. As such requirements are a central part of FL problems, it is natural to ask how one could utilize a sheaf-based framework to find an effective solution to the above question. Given an underlying topology of the client relationships, we employ the notion of a cellular sheaf that captures the underlying geometry, enabling a richer and more nuanced multi-task learning environment. Sheaves enable the representation of local models as \textit{sections} over the underlying space, offering a structured way to capture the relationships between tasks/models in FMTL settings. As an inherent feature of the sheaf-based framework, our approach can support heterogeneity over local models. More specifically, our framework naturally facilitates learning models with different model dimensions. Moreover, sheaves are inherently distributed in nature and hence facilitate decentralized training. A sheaf data structure consists of vector spaces and linear mappings between them. In this work, we model the underlying space as a graph, and vector spaces are defined over vertices and edges, capturing pairwise interactions. Crucially, we are required to learn these maps that constitute the sheaf structure, as a decision variable of our problem. Learning these maps is instrumental in comparing heterogeneous models by projecting them onto a common space. 

\textbf{Contributions.} This paper introduces a novel unified approach that fundamentally rethinks FMTL and gives it a new interpretation by incorporating principles from sheaf theory. Our work primarily targets cross-silo FL environments. Unlike cross-device FL involving millions of resource-constrained edge devices, cross-silo settings typically feature fewer but more computationally capable participants with substantial local computing resources. This context is particularly well-suited for exploring richer collaborative learning frameworks that can effectively handle heterogeneity in exchange for some additional computational overhead. In what follows, we summarize our main contributions
\begin{itemize}
    \item Our proposed framework demonstrates a high degree of flexibility as it addresses the challenges arising from both feature and sample heterogeneity in the context of FMTL exploiting sheaf theory. It may be regarded as a comprehensive and unified framework for FMTL, as it encompasses a multitude of existing frameworks, including personalized FL \citep{hanzely2020federated}, conventional FMTL \citep{dinh2022new}, hybrid FL \citep{zhang2022hybrid}, and conventional FL \citep{mcmahan2017communication}.
    \item To the best of our knowledge, this is the first work that proposes to solve the FMTL in a decentralized setting while modelling higher-order relationships among heterogeneous clients. Furthermore, unlike existing decentralized FMTL frameworks, we learn the interactions between the clients as part of our optimization framework.
    \item Our algorithm, coined {\ours}, exhibits high communication efficiency, as the size of shared vectors among clients is significantly smaller in practice compared to the original models. This advantage is accompanied by computational and storage overhead due to the management of restriction maps, making the approach especially well-suited for cross-silo FL environments where clients have sufficient computational resources.
    \item A detailed convergence analysis of our proposed algorithm shows that the average squared norm of the objective function gradient decreases at a rate of $\mathcal{O}(1/K)$, where $K$ is the number of iterations, recovering the convergence rate of state-of-the-art FMTL \citep{smith2017federated,dinh2022new}.
    \item Extensive simulation results demonstrate the performance of our proposed algorithms on several benchmark datasets compared to state-of-the-art approaches.
\end{itemize}
\section{Related Work}\label{relatedwork}
\textbf{Federated Learning (FL).} FL is designed to train models on decentralized user data without sharing raw data. While numerous FL algorithms \citep{mcmahan2017communication, karimireddy2020scaffold, li2020federated, Lin2020, elgabli22a} have been proposed, most of them typically follow a similar iterative procedure where a server sends a global model to clients for updates. Then, each client trains a local model using its data and sends it back to the server for aggregation to update the global model. However, due to significant variability in locally collected data across clients, data heterogeneity poses a serious challenge. A prevalent assumption within FL literature is that the model size is the same across clients. However, recent works \citep{zhang2022hybrid, liu2022no} have highlighted the significance of incorporating heterogeneous model sizes in FL frameworks in the presence of a parameter server.

\textbf{Personalized FL (PFL).} To address the challenges arising from data heterogeneity, PFL aims to learn individual client models through collaborative training, using different techniques such as local fine-tuning \citep{wang2019federated, yu2020salvaging}, meta-learning \citep{fallah2020personalized,chen2018federated, jiang2019improving}, layer personalization \citep{arivazhagan2019federated, liang2020think, collins2021exploiting}, model mixing \citep{hanzely2020federated, deng2020adaptive}, and model-parameter regularization \citep{t2020personalized, li2021ditto, huang2021personalized, liu2022privacy}. One way to personalize FL is to learn a global model and then fine-tune its parameters at each client using a few stochastic gradient descent steps, as in \citep{yu2020salvaging}. Per-FedAvg \citep{fallah2020personalized} combines meta-learning with FedAvg to produce a better initial model for each client. Algorithms such as FedPer \citep{arivazhagan2019federated}, LG-FedAvg \citep{liang2020think}, and FedRep \citep{collins2021exploiting} involve layer-based personalization, where clients share certain layers while training personalized layers locally.  A model mixing framework for PFL, where clients learn a mixture of the global model and local models was proposed in \citep{hanzely2020federated}. In \citep{t2020personalized}, pFedMe uses an $L_2$ regularization to restrict the difference between the local and global parameters.

\textbf{Federated Multi-Task Learning (FMTL).} FMTL aims to train separate but related models simultaneously across multiple clients, each potentially focusing on different but related tasks. It can be viewed as a form of PFL by considering the process of learning one local model as a single task.  Multi-task learning was first introduced into FL in \citep{smith2017federated}. The authors proposed MOCHA, an FMTL algorithm that jointly learns the local models as well as a task relationship matrix, which captures the relations between tasks.  In the context of FMTL, task similarity can be represented through graphs, matrices, or clustering. In \citep{sattler2020clustered}, clustered FL, an FL framework that groups participating clients based on their local data distribution was proposed. The proposed method tackles the issue of heterogeneity in the local datasets by clustering the clients with similar data distributions and training a personalized model for each cluster. FedU, an FMTL algorithm that encourages model parameter proximity for similar tasks via Laplacian regularization, was introduced in \citep{dinh2022new}. In \citep{sarcheshmehpour2023networked}, the authors leverage a generalized total variation minimization approach to cluster the local datasets and train the local models for decentralized collections of datasets with an emphasis on clustered FL. 

\textbf{Data Heterogeneity in FL.} A fundamental challenge in FL is the inherent heterogeneity of data across clients, commonly referred to as non-IID data \citep{kairouz2019advances, li2020federated}. Following taxonomies proposed in the literature \citep{zhao2018federated, hsieh2020non, li2022federated}, data heterogeneity can be categorized into several distinct types. \textit{Feature distribution skew} (covariate shift) occurs when clients have different marginal feature distributions $P(X)$ while conditional label distributions $P(Y|X)$ remain consistent across clients \citep{quionero2009dataset, koh2021wilds}. This is common in scenarios where data collection environments differ, such as medical imaging equipment varying across hospitals \citep{antunes2022federated}. \textit{Label distribution skew} arises when label distributions $P(Y)$ vary across clients but feature distributions conditioned on labels $P(X|Y)$ remain similar \citep{wang2020federated, li2021model}. For instance, some clients may have more instances of certain classes than others, reflecting demographic or geographical differences \citep{hsu2019measuring}. \textit{Concept shift} occurs when the relationship between features and labels $P(Y|X)$ differs across clients due to local environmental factors or preferences \citep{karimireddy2020scaffold, luo2021no}. This manifests as either different feature manifestations for the same labels \citep{tan2022fedproto} or entirely different decision boundaries across clients for conceptually similar tasks \citep{collins2021exploiting}. \textit{Quantity skew} refers to a significant imbalance in the amount of data possessed by each client \citep{wang2021field, duan2019astraea}, creating a disparity in their contributions to the global model.

\textbf{Sheaves.}
A major limitation in FMTL over a graph, e.g., FMTL with graph Laplacian regularization in \citep{dinh2022new} and FMTL with generalized total variance minimization in \citep{sarcheshmehpour2023networked}, that we wish to address in this work, is their inability to deal with feature heterogeneity between clients. In contrast, \textit{sheaves}, a well-established notion in algebraic topology, can inherently model higher-order relationships among heterogeneous clients. Despite the limited appearance of sheaves in the engineering domain, their importance in organizing information/data distributed over multiple clients/systems has been emphasized in the recent literature \citep{robinson2014topological, robinson2013understanding, riess2022diffusion}. In fact, sheaves can be considered as the canonical data structure to systematically organize local information so that useful global information can be extracted \citep{robinson2017sheaves}. The above-mentioned graph models with node features lying in some fixed space can be considered as the simplest examples of sheaves, where such a graph is equivalent to a \textit{constant sheaf} structure that directly follows from the graph. Motivated by these ideas, our work focuses on using the generality of sheaves to propose a generic framework for FMTL with both data and feature heterogeneity over nodes, generalizing the works of \citep{dinh2022new, sarcheshmehpour2023networked}. The analogous generalization of the graph Laplacian in the sheaf context is the \textit{sheaf Laplacian}. In the context of distributed optimization, \citep{hansen2019distributed} consider sheaf Laplacian regularization with sheaf constraints, i.e., \textit{Homological Constraints}, and the resulting saddle-point dynamics. 
\section{Sheaf-based Federated Multi-Task Learning ({\ours})}\label{setup}
\subsection{FMTL Problem Setting}\label{setting}
We consider a connected network of $N$ clients modeled by a graph $\mathcal{G} = (\mathcal{V}, \mathcal{E})$, where $\mathcal{V} = [N] = \{1, \dots, N\}$ is the set of clients, and $\mathcal{E} \subseteq \mathcal{V} \times \mathcal{V}$ represents the set of edges, i.e., the set of pairs of clients that can communicate with each other. Each client $i \in \mathcal{V}$ has a local loss function $f_i: \mathbb{R}^{d_i} \rightarrow \mathbb{R}$ and only has access to its own local data distribution $\mathcal{D}_i$. Client $i$ can only communicate with the set of its neighbors defined as $\mathcal{N}_i = \{j \in \mathcal{V}|(i, j) \in \mathcal{E} \}$ whose cardinality is $|\mathcal{N}_i| = \delta_i$. In this work, we aim to fit different models, i.e., $\bm{\theta}_i \in \mathbb{R}^{d_i},$  $\forall i \in [N]$, to the local data of clients, while accounting for the interactions between these models. Finally, let $\bm{\theta} = [\bm{\theta}_1^T, \dots, \bm{\theta}_N^T]^T \in \mathbb{R}^{d}$ be the stack of the local decision variables $\{\bm{\theta}_i\}_{i=1}^N$, where $d = \sum_{i=1}^N d_i$.

\begin{figure*}[t]
    \centering
    \includegraphics[scale=0.35]{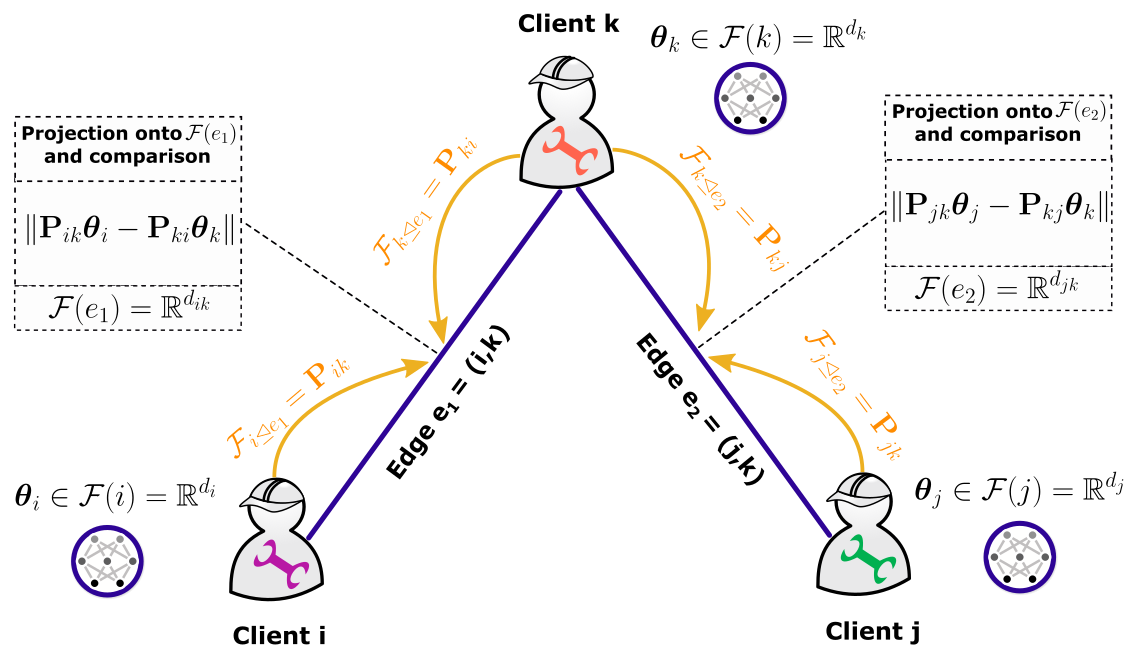}
    \caption{Schematic illustration of the sheaf-based modeling of FMTL.}
    \label{fig:sheaf_illustration}
\end{figure*}

\subsection{A Sheaf Theoretic Approach of the FMTL Problem}\label{sheaves}
A ``cellular sheaf" $\F$ of $\R$-vector spaces over a simple graph $\G = (\mathcal{V}, \mathcal{E})$, i.e., without loops and multiple edges, consists of the following assignments
\begin{itemize}
    \item For each $i \in \mathcal{V}$, a vector space $\F(i)=\R^{d_i}$ of dimension $d_i$,
    \item for each edge $e=(i,j) \in \mathcal{E}$, a vector space $\F(e)=\R^{d_{ij}}$ of dimension $d_{ij}$, and 
    \item for each edge $e \in \mathcal{E}$ and a vertex $i \in \mathcal{V}$ that is incident to the edge $e$, a linear transformation $\F_{i\trianglelefteq e} : \F(i)\rightarrow \F(e)$.
\end{itemize}
We shall refer to $\F(i)$ and $\F(e)$ as stalks over $i$ and $e$, respectively, and the map $\F_{i\trianglelefteq e}$ as the restriction map from $i$ to $e$. Also, given an edge $e=(i,j)\in \mathcal{E}$, we denote the matrix representation of $\F_{i\trianglelefteq e}$, with respect to a chosen basis such as the standard basis, by $\bm{P}_{ij}$. With an abuse of notation, we use  $\F_{i\trianglelefteq e}$ and $\bm{P}_{ij}$ interchangeably, as required, in the remainder of the paper.

Naturally associated with such a sheaf structure are the dual maps, $\F_{i\trianglelefteq e}^*: \F(e)\rightarrow \F(i)$, of the restriction maps. Note that we are using the identification of the dual of a finite-dimensional vector space with itself. It is a standard fact that the matrix representation of the dual map $\F_{i\trianglelefteq e}^*$ is given by the transpose of $\bm{P}_{ij}$. Similarly, with an abuse of notation, we shall also use $\F_{i\trianglelefteq e}^*$ and $\bm{P}_{ij}^T$ interchangeably. 

For each $i \in \mathcal{V}$, $\F(i)$ is the space in which the local model of client $i$ is parameterized, i.e., $\bm{\theta}_i \in \F(i)$. A choice $\{\bm{\theta}_i\}_{i \in \mathcal{V}}$ of local models lies in the total space $C^0(\F):=\bigoplus_{i \in \mathcal{V}} \F(i)$. Note that, we do not assume $d_i$ and $d_j$ to be the same for $i\neq j$. In particular, different clients can have different model sizes that could arise from feature heterogeneity and/or different learning tasks. Also note that an element of $C^0(\F)$ is not fully observable by a single client, as assumed in the FL setting.  

Therefore, any two models can be compared via the restriction maps, provided they share an edge. More specifically, as can be seen from Figure \ref{fig:sheaf_illustration}, for two clients $i$ and $j$ such that $e=(i,j) \in \mathcal{E}$, $\F(e)$ can be considered as the ``disclose space" in which models $\bm{\theta}_i$ and $\bm{\theta}_j$ are compared via the projections $\F_{i\trianglelefteq e} \left( \bm{\theta}_i \right) = \bm{P}_{ij}$ and $\F_{j\trianglelefteq e} \left( \bm{\theta}_j \right) = \bm{P}_{ji}$. Here, the local models \( \bm{\theta}_i \) are assigned to the vertices (the clients), while the restriction maps \( \bm{P}_{ij} \) project the local models onto the edge space capturing the shared features or relationships between the clients.

We refer the reader to Appendix \ref{appendix:discussion} for an interpretation of this viewpoint in the context of linear models. Given a choice of local models, the overall comparison of these models is done in the total space $C^1(\F):= \bigoplus_{e\in \mathcal{E}} \F(e)$ of the disclose spaces. The total discrepancy of such a choice of local models, as measured by comparing their projections onto the disclose spaces, can be formalized via the Laplacian quadratic form associated with the so-called ``sheaf Laplacian", the analogous to the graph Laplacian. To define the Laplacian in the sheaf setting, we first need to orient the edges and define the ``co-boundary map" $\delta: C^0(\F) \rightarrow C^1(\F)$. 

From now onwards, we shall fix an orientation for each edge and write $e=(i,j)$ for an oriented edge. For such an oriented edge $e=(i,j)$, write $e^+=j$ and $e^-=i$. Our discussion is not subjective to the choice of orientation; however, one can choose a canonical orientation associated with an ordering of the vertices, e.g., when vertices are indexed by numbers, by choosing $e^+=\max\{i,j\}$ and $e^-=\min\{i,j\}$ for an unoriented edge $e=\{i,j\}$. Given such an orientation, the co-boundary map $\delta$ is given as follows. For $\bm{\theta} = (\bm{\theta}_i)_{i \in \mathcal{V}}$, the co-boundary of $\bm{\theta}$,   $\delta\left(\bm{\theta} \right) = \left( \delta\left(\bm{\theta} \right) _e \right)_{e \in \mathcal{E}} \in C^1(\F)$, is defined by 
\begin{align}
\delta\left(\bm{\theta} \right) _e = \F_{e^+\trianglelefteq e}\left( \bm{\theta}_{e^+} \right) - \F_{e^-\trianglelefteq e}\left( \bm{\theta}_{e^-} \right).    
\end{align}
As in the case of restriction maps, one also has the dual $\delta^*: C^1(\F) \rightarrow C^0(\F)$ of the co-boundary map $\delta$. The sheaf Laplacian $L_\F : C^0(\F) \rightarrow C^0(\F)$ associated with the cellular sheaf $\F$ is then given by $L_\F = \delta^* \circ \delta$. For a given $\bm{\theta}$, the sheaf Laplacian is defined as $L_\F\left(\bm{\theta}\right) = \left( L_\F\left(\bm{\theta}\right)_j\right)_{j \in \mathcal{V}}$, where
$L_\F\left(\bm{\theta}\right)_j = \sum_{i \in \mathcal{V}} L_{j,i}\left(\bm{\theta}_i\right)$ is given by 
\begin{align}
\label{eqn: Laplacian_defn}
    L_{j,i} = \left\{
    \begin{array}{ll}
          \sum\limits_{i\trianglelefteq e} \F_{i\trianglelefteq e}^* \circ \F_{i\trianglelefteq e}, & \text{if } i = j, \\
           - \F_{j\trianglelefteq e}^* \circ \F_{i\trianglelefteq e} , & \text{if } e=(i, j) \in \mathcal{E},\\
           \bm{0}, & \text{otherwise.}
         \end{array}
    \right.
\end{align}
In particular, based on the ordering of $\mathcal{V}$ that is used to stack the local models, and the chosen bases for $\F(i)$'s and $\F(e)$'s , $L_\F$ is a block matrix structure indexed by $\mathcal{V}$, whose $(j,i)$ block can be directly obtained from (\ref{eqn: Laplacian_defn}). More specifically, with an abuse of notation, writing $L_{j,i}$ in terms of its matrix representation, we have that
\begin{align}
\label{eqn: Laplacian_matrix_form}
    L_{j,i} = \left\{
    \begin{array}{ll}
          \sum\limits_{j' \in  \mathcal{N}_i } \bm{P}_{ij'}^T\bm{P}_{ij'}, & \text{if } i = j, 
          \\
           - \bm{P}_{ji}^T\bm{P}_{ij}, & \text{if } e=(i, j) \in \mathcal{E},\\ 
           \bm{0}, & \text{otherwise.}
         \end{array}
    \right.
\end{align}
In fact, one can write the matrix representation of the co-boundary map, $\bm{P}$, as follows. It has a block structure whose rows are indexed by edges and columns are indexed by vertices. Then, the submatrix $\bm{P}_{e,i}$ that corresponds to row $e\in \mathcal{E}$ and column $i \in \mathcal{V}$ is given by
\begin{align}
\label{eqn: Boundary_matrix_form}
    \bm{P}_{e,i} = \left\{
    \begin{array}{ll}
          \bm{P}_{ij}, & \text{if } e =(i,j) 
          \\
           - \bm{P}_{ij}, & \text{if } e=(j, i) ,\\ 
           \bm{0}, & \text{otherwise.}
         \end{array}
    \right.
\end{align}
From the definition $L_\F = \delta^* \circ \delta$, one can get the matrix form of $L_\F$ as $L_\F = \bm{P}^T \bm{P}$. Assuming the matrix representation of $L_\F$, we often write $L_\F\left(\bm{\theta}\right) = L_\F\,\bm{\theta} = \bm{P}^T \bm{P} \bm{\theta}$. Note that this block structure aligns well with the distributed optimization goal of the FMTL setting.

The significance of the sheaf Laplacian $L_\F$ is characterized by the consensus property given by
\begin{align}
    \ker(L_\F) = \left\{ \left(\bm{\theta}_i\right)_{i \in \mathcal{V}} \in C^0(\F) \,|\, \F_{i\trianglelefteq e}\left( \bm{\theta}_i \right)   =  \F_{j\trianglelefteq e}\left( \bm{\theta}_j \right)  \text{  for  } e=(i,j)\in \mathcal{E}\right\}.
\end{align}
In other words, $\ker(L_\F)$ consists of the choices of local models that are in \textit{global consensus} so that any two comparable local models $\bm{\theta}_i$ and $\bm{\theta}_j$ agree when projected onto the disclose space $\F(e)$, where $e=(i,j) \in \mathcal{E}$. Accordingly, the global consensus constraint on $\bm{\theta} \in  C^0(\F)$ is given by $L_\F\, \bm{\theta}  = \bm{0}$.

Similar to that of a graph Laplacian, the sheaf Laplacian quadratic form $Q_\F(\bm{\theta}) = \bm{\theta}^T L_\F\, \bm{\theta}$ quantifies by how much a given $\bm{\theta}$ deviates from the constraint $L_\F\, \bm{\theta}  = \bm{0}$. The following lemma shows that the sheaf Laplacian quadratic form measures the total discrepancy between the projections of the local models onto the edge spaces, summed over all edges in the graph.
\begin{lemma}\label{lemma1}
\begin{align}
    \bm{\theta}^T L_\F \, \bm{\theta} = \sum_{e=(i,j) \in \mathcal{E}} \left\lVert \F_{i\trianglelefteq e}\left( \bm{\theta}_i \right)  -  \F_{j\trianglelefteq e}\left( \bm{\theta}_j \right) \right\rVert^2 = \sum_{e=(i,j) \in \mathcal{E}} \left\| \bm{P}_{ij} \bm{\theta}_{i} - \bm{P}_{ji} \bm{\theta}_{j} \right\|^2.
\end{align}    
\end{lemma}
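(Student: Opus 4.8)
The plan is to exploit the factorization of the sheaf Laplacian through the co-boundary map that was established earlier, namely $L_\F = \delta^* \circ \delta$ (equivalently $L_\F = \bm{P}^T\bm{P}$ in matrix form). Using this, the quadratic form collapses into a squared norm: applying the defining adjoint relation $\langle \delta^* \bm{\eta}, \bm{\theta}\rangle_{C^0(\F)} = \langle \bm{\eta}, \delta\bm{\theta}\rangle_{C^1(\F)}$ with the choice $\bm{\eta} = \delta\bm{\theta}$, I would write
$$\bm{\theta}^T L_\F\,\bm{\theta} = \langle \delta^*\delta\,\bm{\theta}, \bm{\theta}\rangle_{C^0(\F)} = \langle \delta\bm{\theta}, \delta\bm{\theta}\rangle_{C^1(\F)} = \lVert \delta\bm{\theta}\rVert^2.$$
This reduces the entire claim to understanding the norm of a single element of $C^1(\F)$.

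Next I would use that $C^1(\F) = \bigoplus_{e\in\mathcal{E}} \F(e)$ is an orthogonal direct sum, so the inner product splits as a sum of the per-edge inner products on the stalks $\F(e)$, giving
$$\lVert \delta\bm{\theta}\rVert^2 = \sum_{e\in\mathcal{E}} \lVert \delta(\bm{\theta})_e\rVert^2.$$
Substituting the definition $\delta(\bm{\theta})_e = \F_{e^+\trianglelefteq e}(\bm{\theta}_{e^+}) - \F_{e^-\trianglelefteq e}(\bm{\theta}_{e^-})$ into each summand produces the middle expression of the lemma, once I observe that swapping the roles of $e^+$ and $e^-$ only negates the argument of the norm and therefore leaves $\lVert\cdot\rVert^2$ unchanged — this is precisely why the stated identity is insensitive to the chosen edge orientation. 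Writing $e=(i,j)$ and passing to matrix representations via $\F_{i\trianglelefteq e} = \bm{P}_{ij}$ and $\F_{j\trianglelefteq e} = \bm{P}_{ji}$ then yields the rightmost expression $\sum_{e=(i,j)\in\mathcal{E}} \lVert \bm{P}_{ij}\bm{\theta}_i - \bm{P}_{ji}\bm{\theta}_j\rVert^2$.

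I do not expect a genuine obstacle: the argument is essentially an unpacking of the definition $L_\F = \delta^*\circ\delta$ together with the orthogonal block structure of $C^0(\F)$ and $C^1(\F)$. The only two points requiring a word of care are (i) the adjoint identity, which is immediate from the definition of $\delta^*$ as the dual of $\delta$ under the identification of each finite-dimensional stalk with its dual, and (ii) orientation-independence, which follows from the symmetry of the squared norm under negation of its argument. A more computational alternative would be to expand $\bm{\theta}^T L_\F\,\bm{\theta}$ directly from the block form of $L_\F$, collecting the diagonal contributions $\bm{\theta}_i^T \bm{P}_{ij}^T\bm{P}_{ij}\bm{\theta}_i$ and the cross terms $-2\,\bm{\theta}_i^T\bm{P}_{ij}^T\bm{P}_{ji}\bm{\theta}_j$ and recognizing the total as a sum of expanded squares; I would prefer the co-boundary route since it sidesteps this bookkeeping entirely.
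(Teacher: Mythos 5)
Your proposal is correct, but it follows a genuinely different route from the paper's proof. The paper proves Lemma~\ref{lemma1} by exactly the ``computational alternative'' you mention at the end: starting from the block form of $L_\F$ in (\ref{eqn: Laplacian_matrix_form}), it expands $\bm{\theta}^T L_\F\,\bm{\theta} = \sum_{i,j} \bm{\theta}_i^T L_{i,j}\bm{\theta}_j$, separates the diagonal contributions $\sum_{j \in \mathcal{N}_i}\bm{\theta}_i^T\bm{P}_{ij}^T\bm{P}_{ij}\bm{\theta}_i$ from the cross terms $-2\,\bm{\theta}_i^T\bm{P}_{ij}^T\bm{P}_{ji}\bm{\theta}_j$, regroups the vertex sums into edge sums, and completes the square on each edge. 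Your argument instead exploits the factorization $L_\F = \delta^*\circ\delta$: the adjoint identity collapses the quadratic form to $\lVert\delta\bm{\theta}\rVert^2$, the orthogonal direct-sum structure of $C^1(\F)$ splits this norm over edges, and the definition of $\delta(\bm{\theta})_e$ finishes the job. Both are legitimate starting points, since the paper states both descriptions of $L_\F$. What your route buys: no cross-term bookkeeping, orientation-independence becomes transparent (a negation inside a squared norm), and the argument generalizes verbatim to weighted stalk inner products or to coboundary maps on higher-dimensional cell complexes. What the paper's route buys: it is elementary linear algebra that directly certifies that the explicit block matrix (\ref{eqn: Laplacian_matrix_form})---the object the algorithm actually manipulates in the updates (\ref{thetaupdate}) and (\ref{pijaupdate})---produces the stated quadratic form, and in doing so it implicitly checks the consistency of that block form with $L_\F = \bm{P}^T\bm{P}$, a consistency your proof takes as given. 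The one point you should state explicitly is that the identification of each stalk with its dual is made via the same inner product used for the norms, so that the dual map $\delta^*$ really is the adjoint (transpose in the standard basis); the paper fixes this convention when introducing $\F_{i\trianglelefteq e}^*$, so the adjoint identity you invoke is indeed available.
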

\begin{proof}
The details of the proof are deferred to Appendix \ref{suppLemmas}.
\end{proof}
Next, we show that $\bm{\theta}$ being a global section, i.e., $\bm{\theta} \in \ker(L_\F)$, is equivalent to $\bm{\theta}$ minimizing the sheaf Laplacian quadratic form.
\begin{lemma}
\begin{align}
\ker(L_\F) = \argmin_{\bm{\theta} \in  C^0(\F) }  Q_\F(\bm{\theta}) = \argmin_{\bm{\theta}\in  C^0(\F) } \bm{\theta}^T L_\F\, \bm{\theta}.    
\end{align}    
\end{lemma}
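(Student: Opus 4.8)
The plan is to leverage the factorization $L_\F = \bm{P}^T\bm{P}$ derived above, together with Lemma~\ref{lemma1}, which together reveal that $Q_\F$ is a nonnegative quadratic form whose zero set is precisely the consensus set. First I would record that $L_\F = \bm{P}^T\bm{P}$ is positive semidefinite, so that $Q_\F(\bm{\theta}) = \bm{\theta}^T L_\F\, \bm{\theta} = \lVert \bm{P}\bm{\theta}\rVert^2 \geq 0$ for every $\bm{\theta} \in C^0(\F)$. Since $Q_\F(\bm{0}) = 0$, the infimum of $Q_\F$ over $C^0(\F)$ equals $0$ and is attained, so the minimizer set is exactly the zero set:
\begin{align*}
\argmin_{\bm{\theta}\in C^0(\F)} Q_\F(\bm{\theta}) = \left\{ \bm{\theta}\in C^0(\F) \,:\, Q_\F(\bm{\theta}) = 0 \right\}.
\end{align*}
This reduces the statement to identifying this zero set with $\ker(L_\F)$.

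To carry out that identification I see two interchangeable routes. The first invokes Lemma~\ref{lemma1}: since $Q_\F(\bm{\theta}) = \sum_{e=(i,j)\in\mathcal{E}} \lVert \bm{P}_{ij}\bm{\theta}_i - \bm{P}_{ji}\bm{\theta}_j\rVert^2$ is a sum of nonnegative terms, it vanishes if and only if each summand vanishes, i.e. $\F_{i\trianglelefteq e}(\bm{\theta}_i) = \F_{j\trianglelefteq e}(\bm{\theta}_j)$ for every edge $e=(i,j)\in\mathcal{E}$. This is verbatim the consensus description of $\ker(L_\F)$ recorded earlier, giving the desired set equality. The second route works directly from the factorization: $Q_\F(\bm{\theta}) = \lVert \bm{P}\bm{\theta}\rVert^2 = 0$ holds if and only if $\bm{P}\bm{\theta} = \bm{0}$, so the zero set equals $\ker(\bm{P})$, and it remains to note $\ker(\bm{P}) = \ker(\bm{P}^T\bm{P}) = \ker(L_\F)$.

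The only step requiring a genuine argument, and thus the main obstacle, is the identity $\ker(\bm{P}^T\bm{P}) = \ker(\bm{P})$ in this second route. One inclusion is immediate, since $\bm{P}\bm{\theta} = \bm{0}$ forces $\bm{P}^T\bm{P}\bm{\theta} = \bm{0}$. For the reverse, if $L_\F\,\bm{\theta} = \bm{P}^T\bm{P}\bm{\theta} = \bm{0}$, then $0 = \bm{\theta}^T\bm{P}^T\bm{P}\bm{\theta} = \lVert \bm{P}\bm{\theta}\rVert^2$, whence $\bm{P}\bm{\theta} = \bm{0}$; this is precisely the standard positive-semidefiniteness argument and mirrors the logic of the first route. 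I would favor presenting the proof via Lemma~\ref{lemma1}, since it already packages the nonnegativity and the edgewise vanishing condition, so that matching against the previously stated consensus characterization of $\ker(L_\F)$ completes the argument with essentially no further computation.
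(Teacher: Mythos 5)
Your proposal is correct and, in the route you favor presenting, essentially identical to the paper's own proof: the paper also argues that kernel elements make $Q_\F$ vanish and that any minimizer has $Q_\F(\bm{\theta})=0$, then invokes Lemma~\ref{lemma1} to conclude edgewise agreement $\F_{i\trianglelefteq e}(\bm{\theta}_i)=\F_{j\trianglelefteq e}(\bm{\theta}_j)$ and matches this against the consensus characterization of $\ker(L_\F)$. Your write-up is in fact slightly more complete, since you explicitly justify the non-negativity of $Q_\F$ and that the minimum value $0$ is attained (facts the paper leaves implicit), and your second route via $\ker(\bm{P}^T\bm{P})=\ker(\bm{P})$ is a self-contained alternative that does not presuppose the consensus description quoted in the main text.
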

\begin{proof}
 The proof is provided in Appendix \ref{suppLemmas}.
\end{proof}
In the context of FMTL, a cellular sheaf is a structured way to assign information to each client (node) and their connections (edges) in a network as follows
\begin{itemize}
\item Nodes (Clients): Each client $i$ has its own local model $\bm{\theta}_i \in \mathbb{R}^{d_i}$.
\item Edges (Interaction space): Each connection between clients $i$ and $j$ has a shared space $\mathcal{F}(e) \in \mathbb{R}^{d_{ij}}$.
\item Restriction Maps: The mappings $\bm{P}_{ij}: \mathbb{R}^{d_i} \rightarrow \mathbb{R}^{d_{ij}}$ project the local model of client $i$ into the shared space with client $j$. This projection facilitates meaningful comparisons and collaborations between clients, ensuring that heterogeneous models can still interact effectively within the network.
\end{itemize} 
This setup allows us to systematically ensure that the models of connected clients align within their shared interaction spaces, promoting consistency and cooperation across the network. To formulate the FMTL optimization problem, we aim to minimize the combined objectives of individual client losses and a regularization term that enforces consistency across client models. Specifically, each client seeks to minimize its own loss function based on its local data, while the regularization term penalizes discrepancies between connected clients in their shared interaction spaces. Building upon this, we can express the regularization term more succinctly using the sheaf Laplacian matrix \( L_{\mathcal{F}}(\bm{P}) \). This matrix encapsulates the structural relationships and shared interaction spaces between clients, allowing us to reformulate the optimization problem in a compact and mathematically elegant manner as follows
\begin{align}\label{P2}
    \min_{\bm{\theta}, \bm{P} } \Psi(\bm{\theta}, \bm{P}) = f(\bm{\theta}) + \frac{\lambda}{2} \bm{\theta}^T L_{\mathcal{F}}(\bm{P}) \bm{\theta},
\end{align}
where $f(\bm{\theta}) = \sum_{i=1}^N f_i(\bm{\theta}_i)$ and $L_{\mathcal{F}}(\bm{P})$ is the sheaf Laplacian for the choices of the restriction maps. The sheaf Laplacian regularization term \( \frac{\lambda}{2} \bm{\theta}^T L_{\mathcal{F}}(\bm{P}) \bm{\theta} \) enforces consistency between the projections of the local models onto the edge space, promoting collaboration among the clients. A more in-depth discussion on the rationale behind using Sheaf theory to model clients' interaction in the context of FMTL can be found in Appendix \ref{app:sheaf_theory}. In the next subsection, we show that our proposed framework is very general and covers many special cases previously introduced in the literature.
\subsection{Special Cases}\label{appendix:specialcases}
In this section, we show how some previous works can be considered as special cases of our framework. Most of the works that we refer to, except \citep{zhang2022hybrid}, consider the same model size for all the clients. Thus, unless otherwise stated, we assume in the rest of this subsection that all the local models are assumed to have the same size $p \in \mathbb{N}$, i.e., $\forall i \in \mathcal{V}, \bm{\theta}_i \in \mathbb{R}^p$. In particular, we chose $\forall i \in \mathcal{V}$ and $ \forall e\in \mathcal{E}$, $\F(i) = \F(e) = \mathbb{R}^{p}$. 
\\
\textbf{Connection with conventional FMTL \citep{dinh2022new}.} 
In conventional FMTL, we aim to solve the problem
\begin{align}
    \min_{\{\bm{\theta}_i\}_{i \in \mathcal{V}}} \sum_{i=1}^N f_i(\bm{\theta}_i) + \frac{\lambda}{2} \sum_{i=1}^N \sum_{j \in \mathcal{N}_i} a_{ij} \left\|  \bm{\theta}_{i} - \bm{\theta}_{j} \right\|^2, \label{eqn: conventional FMTL objective}
\end{align}
where the weights $\{a_{ij}\}$ are assumed to be known in advance. This problem is a special case associated with the sheaf $\F$ arising by choosing $\F_{i \trianglelefteq e} = \bm{P}_{ij} = \sqrt{a_{ij}} \bm{I}_{p}$, where $\bm{I}_{p}$ is the $p \times p$ identity map/matrix. The dimension of the projection space is $d_{ij} = p$ for all edges $(i, j)$. For this particular choice of the sheaf $\F$, the associated Laplacian quadratic form is precisely 
    \begin{align}
     Q_{\F}\left( \bm{\theta} \right)= \bm{\theta}^T {L}_{\F} \, \bm{\theta} = \sum_{i,j \triangleleft e} a_{ij}\left\lVert  \bm{\theta}_i - \bm{\theta}_j \right \rVert^2 = \sum_{(i,j) \in \mathcal{E}} a_{ij} \left\| \bm{\theta}_i - \bm{\theta}_j \right\|^2.  
    \end{align} 
Replacing this into (\ref{P1}), we get the conventional FMTL problem (\ref{eqn: conventional FMTL objective}).\\
\textbf{Connection with conventional FL \citep{ye2020decentralized}.} 
Setting the restriction maps to be $\bm{P}_{ij} = \bm{I}_{p}$ and $d_{ij} = p, ~\forall (i,j) \in [N] \times [N]$. Then, the sheaf Laplacian regularization is given by
\begin{align}
    Q_\mathcal{F}(\bm{\theta}) = \sum_{(i,j) \in \mathcal{E}} \left\| \bm{\theta}_i - \bm{\theta}_j \right\|^2.
\end{align}
Hence, (\ref{P1}) reduces to
\begin{align}
    \min_{\{\bm{\theta}_i\}_{i \in \mathcal{V}}} \sum_{i=1}^N f_i(\bm{\theta}_i) + \frac{\lambda}{2} \sum_{(i,j) \in \mathcal{E}} \left\| \bm{\theta}_i - \bm{\theta}_j \right\|^2
\end{align}
Taking $\lambda \rightarrow \infty$, as pointed out in Remark \ref{remark1}, one recovers the conventional FL problem
\begin{align}
    \nonumber &\min_{\{\bm{\theta}_i\}_{i \in \mathcal{V}}} \sum_{i=1}^N f_i(\bm{\theta}_i) \\
    &~\text{s.t } \bm{\theta}_i = \bm{\theta}_j, \forall (i,j) \in \mathcal{E}.
\end{align}
\textbf{Connection with personalized FL \citep{hanzely2020lower}.} Introducing the client $0$, e.g., a server, where $f_0 \triangleq 0$ and $\bm{\theta}_0 = \bm{\bar{\theta}}$. Furthermore, let $\bm{P}_{0i} = \bm{P}_{i0} =  \bm{I}_p, ~\forall i \in [N]$, and  $\bm{P}_{ij} = \bm{0}_p, ~\forall (i,j) \in [N] \times [N]$, where $\bm{0}_p$ is the $p \times p$ zero map/matrix. Hence, the set of neighbours of client $0$ is $\mathcal{N}_0 = [N]$, and the set of each client $i \in [N]$ is $\mathcal{N}_i = \{0\}$. We observe that this amounts to choosing the constant sheaf over the graph that connects each client to the server. Then, the associated Laplacian quadratic form can be written as
\begin{align}
Q_{\F}\left( \bm{\theta} \right) =  \sum_{i \in \mathcal{N}_0} \left\| \bm{P}_{0i} \bm{\theta}_{0} - \bm{P}_{i0} \bm{\theta}_{i} \right\|^2 =  \sum_{i=1}^N \left\|\bm{\bar{\theta}} - \bm{\theta}_{i} \right\|^2.  
\end{align}
Therefore, (\ref{P1}) reduces to the personalized FL objective \citep[Eq. (2)]{hanzely2020lower}
\begin{align}
    \min_{\{\bm{\theta}_i\}_{i \in \mathcal{V}}} \sum_{i=1}^N f_i(\bm{\theta}_i) + \frac{\lambda}{2} \sum_{i=1}^N  \left\|  \bm{\theta}_{i} - \bar{\bm{\theta}} \right\|^2.
\end{align}\\
\textbf{Connection with hybrid FL \citep{zhang2022hybrid}.}
In this case,  a communication framework is established between a server (with index $0$) and a set of clients ($i \in [N]$), with each client connected solely to the server. The server has access to all features, while clients are constrained by their local features. Hence, for each client $i$, $d_i \leq d_0$, where $\bm{\theta}_i \in \mathbb{R}^{d_i}$ and $\bm{\theta}_0 \in \mathbb{R}^{d_0}$ are the models of client $i$ and the server, respectively. Let $\bm{\Pi}_i$ denote binary matrices that prune the server model to align with the client local model, referred to as the \emph{selection matrices}. Given the above description, we have $\F(i)=\mathbb{R}^{d_i}$ and $\F(e)=\mathbb{R}^{d_i}$ for every $i \in [N]$ and edge of the form $e=(i,0)$. The associated restriction maps are $ \bm{P}_{0i} = \bm{\Pi}_i $ and $\bm{P}_{i0}=\bm{I}_{d_i}$, for $i\in [N]$ and $\bm{P}_{ij} = \bm{0}_p, ~\forall (i,j) \in [N] \times [N]$. With these choices, the Laplacian quadratic form of this sheaf is equal to the regularizer term  
\begin{align}
\label{eqn: regularizer_2_of_zang}
     Q_{\F}\left(\bm{\theta}, \bm{\Pi} \right) = \sum_{i=1}^N \left\lVert \bm{\theta}_i - \bm{\Pi}_i \bm{\theta}_0 \right\rVert^2.
\end{align}
Replacing this into (\ref{P1}), we get the hybrid FL objective \citep[Eq. (6) given $\mu_1 = 0$]{zhang2022hybrid}.
\subsection{Proposed Algorithm \& Convergence Analysis} \label{algorithm}
Note that problem (\ref{P2}) arising from the sheaf formulation can be re-written as follows
\begin{align}\label{P1}
    \min_{ \substack{\{\bm{\theta}_i\}_{i \in \mathcal{V}}, \\ \{\bm{P}_{ij}\}_{(i,j) \in \mathcal{E}}}} \sum_{i=1}^N f_i(\bm{\theta}_i) + \frac{\lambda}{2} \sum_{i=1}^N \sum_{j \in \mathcal{N}_i} \left\| \bm{P}_{ij} \bm{\theta}_{i} - \bm{P}_{ji} \bm{\theta}_{j} \right\|^2,
\end{align} 
where $\|\cdot\|$ is the Euclidean norm, and $\forall (i,j) \in \mathcal{E}$, $\bm{P}_{ij}$ is a matrix with size $(d_{ij}, d_i)$ such that $d_{ij} = d_{ji}$. In (\ref{P1}), we propose to jointly learn the models $\{\bm{\theta}_i\}_{i \in \mathcal{V}}$ and the matrices $\{\bm{P}_{ij}\}_{(i,j) \in \mathcal{E}}$. The matrix $\bm{P}_{ij}$ can be seen as an encoding or compression matrix since it maps the higher-dimensional vector $\bm{\theta}_i$ to a lower-dimensional space with dimension $d_{ij}$, effectively retaining only the most important features or information shared between the two clients $i$ and $j$. Hence, the term $(\bm{P}_{ij} \bm{\theta}_i - \bm{P}_{ji} \bm{\theta}_j)$ captures the dissimilarity or discrepancy between the two vectors $\bm{\theta}_i$ and $\bm{\theta}_j$ in this shared subspace. 
\begin{remark} \label{remark1}
In (\ref{P1}), the hyperparameter $\lambda$ controls the impact of the models of neighboring clients on each local model. When $\lambda > 0$, the minimization of the regularization term promotes the proximity among the models of neighboring clients. On the other hand, if $\lambda = 0$, (\ref{P1}) reduces to an individual learning problem, wherein each client independently learns its local model $\bm{\theta}_i$ solely from its local data, without engaging in any collaborative efforts with the other clients. Finally, as $\lambda \rightarrow \infty$, (\ref{P1}) boils down to the classical FL problem where the aim is to learn a global model \citep{hanzely2020federated}. 
\end{remark}

Next, we propose a communication-efficient algorithm to solve (\ref{P1}) by adopting an iterative optimization approach. Since the objective function is assumed to be differentiable, we can use gradient-based optimization methods. More specifically, we will use alternating gradient descent (AGD) updates for $\{\bm{\theta}_i\}_{i \in [N]}$ and $\{\bm{P}_{ij}\}_{(i,j) \in \mathcal{E}}$, respectively. At iteration $(k+1)$, client $i$ sends $\bm{P}_{ij}^{k} \bm{\theta}_i^{k}$ and receives $\{\bm{P}_{ji}^{k} \bm{\theta}_j^{k}\}_{j \in \mathcal{N}_i}$ from its neighbours, to update its model $\bm{\theta}_i$, using one gradient descent step
\begin{align}\label{thetaupdate}
    \bm{\theta}_i^{k+1}\!=\!\bm{\theta}_i^{k}\!-\!\alpha \left(\nabla f_i(\bm{\theta}_i^k)\!+\!\lambda \sum_{j \in \mathcal{N}_i} (\bm{P}_{ij}^k)^T ( \bm{P}_{ij}^k \bm{\theta}_i^{k}\!-\!\bm{P}_{ji}^k \bm{\theta}_j^{k}) \right).
\end{align}
Then, client $i$ sends $\bm{P}_{ij}^{k} \bm{\theta}_i^{k+1}$ and receives $\{\bm{P}_{ji}^{k} \bm{\theta}_j^{k+1}\}_{j \in \mathcal{N}_i}$ from its neighbours, to be able to update its matrices $\{\bm{P}_{ij}\}_{j \in \mathcal{N}_i}$, using one gradient descent step, according to
\begin{align}\label{pijaupdate}
    \bm{P}_{ij}^{k+1}\!=\!\bm{P}_{ij}^{k}\!-\! \eta \lambda  (\bm{P}_{ij}^{k} \bm{\theta}_i^{k+1} \!-\!\bm{P}_{ji}^{k} \bm{\theta}_j^{k+1}) (\bm{\theta}_i^{k+1})^T,
\end{align}
where $\alpha$ and $\eta$ are two learning rates. Note that, in our proposed algorithm, neighbouring clients only share vectors and no matrix exchange is needed in both updates (\ref{thetaupdate}) and (\ref{pijaupdate}). Our proposed method is summarized in Algorithm \ref{algo}.
\begin{remark}
Note that each neighbour of the node $i$ is required to send the vector $\bm{P}_{ji} \bm{\theta}_j$ in order to update $\bm{\theta}_i$ and $\bm{P}_{ij}$ as per (\ref{thetaupdate}) and (\ref{pijaupdate}). The dimension of $\bm{P}_{ji} \bm{\theta}_j$ is $d_{ij}$, which in practice could be much smaller than $d_j$, the size of $\bm{\theta}_j$. For example, a reasonable choice of $d_{ij}$ is $d_{ij} = \min(d_i, d_j)$. Hence, our proposed algorithm is more communication-efficient than sending the models $\{\bm{\theta}_i\}_{i \in \mathcal{V}}$.
\end{remark}
\begin{remark}
It is crucial to initialize the restriction maps $\bm{P}_{ij}^0$ with non-zero values (e.g., randomly, as done in our experiments). If initialized as zero matrices, the gradient update for $\bm{P}_{ij}$
in (\ref{pijaupdate}) would always be zero, preventing the learning of interactions and causing the algorithm to reduce to independent local training.   
\end{remark}

\begin{algorithm}[tb]
   \caption{Sheaf-based Federated Multi-Task Learning ({\ours})}
   \label{algo}
   \begin{algorithmic}[1]
      \State \textbf{Parameters:} Number of clients $N$, number of iterations $K$, learning rates $(\alpha, \eta)$, regularization parameter $\lambda$.
      \State \textbf{Initialization:} Initial models $\{\bm{\theta}_i^0\}_{i=1}^N$, initial matrices $\{\bm{P}_{ij}^0\}_{(i,j) \in \mathcal{E}}$.
      \For{$k = 0, \dots, K$}
         \For{$i = 1, \dots, N$ \textbf{in parallel}}
            \State $\triangleright$ Sends $\bm{P}_{ij}^{k} \bm{\theta}_i^{k}$ and receives $\bm{P}_{ji}^{k} \bm{\theta}_j^{k}$ from neighbors $j \in \mathcal{N}_i$
            \State $\triangleright$ Updates its model:
            \[
              \bm{\theta}_i^{k+1} = \bm{\theta}_i^{k} - \alpha \left(\nabla f_i(\bm{\theta}_i^k) + \lambda \sum_{j \in \mathcal{N}_i} (\bm{P}_{ij}^k)^T ( \bm{P}_{ij}^k \bm{\theta}_i^{k} - \bm{P}_{ji}^k \bm{\theta}_j^{k}) \right)
            \]
            \State $\triangleright$ Sends $\bm{P}_{ij}^{k} \bm{\theta}_i^{k+1}$ and receives $\bm{P}_{ji}^{k} \bm{\theta}_j^{k+1}$ from neighbors $j \in \mathcal{N}_i$
            \State $\triangleright$ Updates its matrix:
            \[
              \bm{P}_{ij}^{k+1} = \bm{P}_{ij}^{k} - \eta \lambda (\bm{P}_{ij}^{k} \bm{\theta}_i^{k+1} - \bm{P}_{ji}^{k} \bm{\theta}_j^{k+1}) (\bm{\theta}_i^{k+1})^T
            \]
         \EndFor
      \EndFor
   \end{algorithmic}
\end{algorithm}
Next, we turn to analyzing the convergence of {\ours}. To this end, we start by making the following standard assumptions.

\textbf{Assumption 1 (Smoothness).} $\forall i \in [N]$, the function $f_i$ is assumed to be $L$-smooth, i.e., there exists $L > 0$ such that $\forall i \in [N]$, $\forall \bm{\theta}_1, \bm{\theta}_2$, $\|\nabla f_i(\bm{\theta}_2) - \nabla f_i(\bm{\theta}_1)\| \leq L \|\bm{\theta}_2-\bm{\theta}_1\|$.

\textbf{Assumption 2 (Bounded domain).} There exists $D_{\theta} > 0$ such that $\|\bm{\theta}\| \leq D_{\theta}$.

Assumptions 1-2 are key assumptions that are often used in the context of distributed optimization \citep{karimireddy2020scaffold, li2020federated, deng2020distributionally, deng2024distributed}. In particular, Assumption 2 is commonly used in the convex-concave minimax literature \citep{deng2020distributionally, deng2024distributed}. The following theorem establishes the convergence rate of the {\ours} algorithm.
\begin{theorem}\label{theorem}
Let Assumptions 1 and 2 hold, and the learning rates $\alpha$ and $\eta$ satisfy the conditions $\alpha < \frac{2}{N L}$ and $\eta < \frac{2}{\lambda D_\theta^2}$, respectively. Then, the averaged gradient norm is upper bounded as follows
\begin{align}
\frac{1}{K}\sum_{k=0}^{K-1} \|\nabla \Psi(\bm{\theta}^k, \bm{P}^k)\|^2 \leq \frac{1}{\rho K} (\Psi(\bm{\theta}^{0}, \bm{P}^0) - \Psi^\star),
\end{align}
where $\rho = \min\left\{\alpha \left(1 - \frac{\alpha N L}{2}\right), \eta  \left(1 - \frac{\eta \lambda D_{\theta}^2}{2} \right) \right\}$ and $\Psi^\star$ is the optimal value of $\Psi$.
\end{theorem}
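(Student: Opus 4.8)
The plan is to recognize Algorithm~\ref{algo} as a two-block (Gauss--Seidel) alternating gradient-descent scheme on the smooth objective $\Psi$ and then to run the classical sufficient-decrease-plus-telescoping argument for nonconvex smooth optimization. First I would verify that the two updates are exactly partial gradient steps. The $\bm\theta$-update (\ref{thetaupdate}) reads $\bm\theta^{k+1} = \bm\theta^{k} - \alpha\,\nabla_{\bm\theta}\Psi(\bm\theta^k,\bm P^k)$, since the bracketed term is precisely the $i$-th block of $\nabla f(\bm\theta^k) + \lambda L_{\F}(\bm P^k)\bm\theta^k$; likewise the $\bm P$-update (\ref{pijaupdate}) is $\bm P^{k+1} = \bm P^{k} - \eta\,\nabla_{\bm P}\Psi(\bm\theta^{k+1},\bm P^k)$, because $\lambda(\bm P_{ij}\bm\theta_i - \bm P_{ji}\bm\theta_j)\bm\theta_i^{T}$ is the derivative of the summand $\tfrac{\lambda}{2}\|\bm P_{ij}\bm\theta_i-\bm P_{ji}\bm\theta_j\|^2$ with respect to $\bm P_{ij}$. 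Thus one outer iteration is a gradient step in $\bm\theta$ followed by a gradient step in $\bm P$.

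The second step is to establish block-wise Lipschitz smoothness of the partial gradients, which is where the two step-size conditions come from. For the $\bm P$-block, $\bm P \mapsto \Psi(\bm\theta^{k+1},\bm P)$ is a convex quadratic whose Hessian is built from the outer products $\bm\theta_i\bm\theta_i^{T}$; its spectral norm is controlled by $\lambda\max_i\|\bm\theta_i\|^2 \le \lambda D_\theta^2$ via Assumption~2, giving a smoothness constant $\lambda D_\theta^2$, exactly the quantity in the restriction $\eta < 2/(\lambda D_\theta^2)$. For the $\bm\theta$-block, $\nabla_{\bm\theta}\Psi(\cdot,\bm P^k) = \nabla f(\cdot) + \lambda L_{\F}(\bm P^k)(\cdot)$ is Lipschitz with a constant combining the $L$-smoothness of the $f_i$ from Assumption~1 with the spectral norm of the sheaf Laplacian $L_{\F}(\bm P^k)$; this is the constant $NL$ underlying $\alpha < 2/(NL)$.

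With these constants in hand I would invoke the descent lemma on each sub-step. The $\bm\theta$-step gives
\[
\Psi(\bm\theta^{k+1},\bm P^k) \le \Psi(\bm\theta^{k},\bm P^k) - \alpha\Bigl(1-\tfrac{\alpha N L}{2}\Bigr)\bigl\|\nabla_{\bm\theta}\Psi(\bm\theta^{k},\bm P^k)\bigr\|^2,
\]
and the $\bm P$-step gives
\[
\Psi(\bm\theta^{k+1},\bm P^{k+1}) \le \Psi(\bm\theta^{k+1},\bm P^k) - \eta\Bigl(1-\tfrac{\eta\lambda D_\theta^2}{2}\Bigr)\bigl\|\nabla_{\bm P}\Psi(\bm\theta^{k+1},\bm P^k)\bigr\|^2 .
\]
Chaining the two, using $\rho = \min\{\alpha(1-\tfrac{\alpha NL}{2}),\,\eta(1-\tfrac{\eta\lambda D_\theta^2}{2})\}$ (strictly positive precisely under the stated step-size conditions), summing over $k=0,\dots,K-1$, telescoping, and bounding $\Psi(\bm\theta^{K},\bm P^{K}) \ge \Psi^\star$ yields $\rho\sum_{k}\bigl(\|\nabla_{\bm\theta}\Psi(\bm\theta^{k},\bm P^k)\|^2 + \|\nabla_{\bm P}\Psi(\bm\theta^{k+1},\bm P^k)\|^2\bigr) \le \Psi(\bm\theta^{0},\bm P^0)-\Psi^\star$, and dividing by $\rho K$ gives the claimed $\mathcal O(1/K)$ rate.

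The main obstacle is the evaluation-point mismatch intrinsic to the Gauss--Seidel ordering: the decrease naturally controls the $\bm P$-gradient at the \emph{half-updated} iterate $(\bm\theta^{k+1},\bm P^k)$, whereas the full gradient $\nabla\Psi(\bm\theta^{k},\bm P^k)$ in the statement wants the $\bm P$-gradient at $(\bm\theta^{k},\bm P^k)$. Reconciling the two requires either reading the left-hand side as the stationarity surrogate $\|\nabla_{\bm\theta}\Psi(\bm\theta^{k},\bm P^k)\|^2+\|\nabla_{\bm P}\Psi(\bm\theta^{k+1},\bm P^k)\|^2$, or bounding $\|\nabla_{\bm P}\Psi(\bm\theta^{k},\bm P^k)-\nabla_{\bm P}\Psi(\bm\theta^{k+1},\bm P^k)\|$ by a cross-block Lipschitz constant times $\|\bm\theta^{k+1}-\bm\theta^k\| = \alpha\|\nabla_{\bm\theta}\Psi(\bm\theta^{k},\bm P^k)\|$ and absorbing it into the $\bm\theta$-decrease term, again leaning on Assumption~2 to keep the relevant norms bounded. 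A related subtlety is ensuring the $\bm\theta$-block constant $NL$, i.e. the bound on $\|L_{\F}(\bm P^k)\|$, holds uniformly along the trajectory, since the restriction maps $\bm P^k$ are themselves updated without an explicit projection.
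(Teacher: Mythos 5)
Your proposal follows essentially the same route as the paper's own proof: both view the algorithm as Gauss--Seidel alternating gradient descent, derive the two block descent inequalities (the $\bm{\theta}$-step with constant $NL$, the $\bm{P}$-step with constant $\lambda D_\theta^2$ via Assumption 2), chain them, telescope over $k$, and bound the telescoped sum by $\Psi(\bm{\theta}^0,\bm{P}^0)-\Psi^\star$. The two subtleties you flag are genuine and are in fact silently glossed over by the paper itself: its final step identifies $\sum_k\bigl(\|\nabla_{\bm{\theta}}\Psi(\bm{\theta}^k,\bm{P}^k)\|^2+\|\nabla_{\bm{P}}\Psi(\bm{\theta}^{k+1},\bm{P}^k)\|^2\bigr)$ with $\sum_k\|\nabla\Psi(\bm{\theta}^k,\bm{P}^k)\|^2$ without bridging the evaluation-point mismatch, and its $\bm{\theta}$-descent derivation drops the curvature contribution $\tfrac{\lambda}{2}\|\bm{P}^k(\bm{\theta}^{k+1}-\bm{\theta}^k)\|^2$ of the quadratic regularizer, i.e., it uses $NL$ where the true block constant is $NL+\lambda\|(\bm{P}^k)^T\bm{P}^k\|$, which is not uniformly bounded along the trajectory without the kind of additional argument you describe.
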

\begin{proof}
The proof can be found in Appendix \ref{theorem_proof}.
\end{proof}
Theorem \ref{theorem} shows that the {\ours} algorithm converges to a stationary point of the objective function at a rate of $\mathcal{O}(1/K)$. The proof involves two main steps. First, we study the descent step in the variable $\bm{\theta}^{k+1}$ and then, we study the descent step in the variable $\bm{P}^{k+1}$.
\section{Experiments} \label{experiments} 
\subsection{Experimental Setup} \label{experimentalsetup}
To validate our theoretical foundations, we numerically evaluate the performance of our proposed algorithm {\ours} using two experiments: (i) the clients have the same model size in Section \ref{experiment1}, and (ii) the clients have different model sizes in Section \ref{experiment2}. \\
\textbf{Datasets.} In the first experiment, we examine two datasets: rotated MNIST and heterogeneous CIFAR-10. A detailed description of the datasets can be found in Appendix \ref{appendix:dataset}. In appendix \ref{appendix:moreResults}, we report additional results using four more datasets.\\ 
\textbf{Model architecture.} For image datasets, we employ CNN architectures: rotated MNIST uses a CNN with two convolutional layers (32 and 64 filters) followed by a fully connected layer outputting 10 classes, while heterogeneous CIFAR-10 uses a similar architecture adapted for RGB inputs. For tabular datasets, we use multinomial logistic regression for classification tasks and linear models for regression tasks. In Sections \ref{experiment1} and \ref{ablation}, where clients have the same model size, the entire network architecture, including the final classification layer, is identical across all clients. In Section \ref{experiment2}, we investigate FMTL in a heterogeneous setting where clients have different model architectures. To simulate devices with varying computational capabilities, we employ different CNN architectures summarized in Table \ref{tab:heterogeneous_cnn_architectures}. These models are distributed among clients in almost equal proportions, creating a heterogeneous federation. \\
\textbf{Evaluation.} In the first experiment, we use a train/test split ratio of $75\%/25\%$ for all datasets, as done in \citep{smith2017federated}, where the test set for each client maintains the same data distribution characteristics (e.g., feature skew, label distribution) as their respective training set. Similar to \citep{dinh2022new}, we reduce the data size by $80\%$ for half of the clients to mimic the real-world FL setting where some clients have small datasets and can benefit from collaboration. For the regression tasks, we consider the loss function to be the regularized $L_2$ loss, while we use the $L_2$-regularized cross-entropy loss for the classification tasks. For each experiment, we report both the average test accuracy/MSE and its corresponding one standard error shaded area based on five runs. \\
\textbf{Parameters and hyperparameters.} In Sections \ref{experiment1} and \ref{ablation}, where all clients have the same model dimension $d$, we set $d_{ij} = \lfloor \gamma d \rfloor$ for all edges $(i, j) \in \mathcal{E}$, where $\gamma \in (0, 1]$ controls the communication/computation trade-off. For experiments with heterogeneous model dimensions $d_i \neq d_j$ (Section~\ref{experiment2}), we ensure compatibility and symmetry ($d_{ij}=d_{ji}$) by setting $d_{ij} = \lfloor \gamma \min(d_i, d_j) \rfloor$. In all our experiments, unless it is otherwise stated, the graph topology is based on the Erdős-Rényi model, where we randomly generate a network consisting of $N$ clients with a connectivity ratio $p=0.15$ for rotated MNIST and heterogeneous CIFAR-10 datasets and $p=0.2$ for the rest of the datasets. Both learning rates are chosen to be small to satisfy the conditions in Theorem \ref{theorem}. Furthermore, {\ours} uses the same global learning rate ($\alpha$) as dFedU to update the models for a fair comparison. The linear maps are initialized randomly from a normal distribution with a mean of 0 and a variance of 1. The values of the regularization parameter ($\lambda$) used for every dataset are listed in Table \ref{table:regularization}. Finally, learning rates and other relevant hyperparameters were tuned for all baseline methods using a grid search on a validation set to ensure a fair comparison.
\begin{table}[t]
\centering
\caption{
  Regularization parameters used during training.
  }
\label{table:regularization} 
\vspace{0.5em}
\begin{tabular}{@{}lcccccc@{}}
\toprule
\textbf{Dataset} & \multicolumn{1}{l}{HAR} & \multicolumn{1}{l}{Vehicle} & \multicolumn{1}{l}{GLEAM} & \multicolumn{1}{l}{School} & \multicolumn{1}{l}{R-MNIST} & \multicolumn{1}{l}{H-CIFAR-10} \\ \midrule
Regularization parameter ($\lambda$) & $0.05$ & $0.001$ & $0.001$ & $0.01$ & $0.001$ & $0.001$  \\ \bottomrule
\end{tabular}
\end{table}

\begin{table}[t] 
    \centering
    \caption{Summary of heterogeneous CNN architectures employed in Section~\ref{experiment2} for rotated MNIST and heterogeneous CIFAR-10 datasets, simulating clients with varying computational capabilities. The total number of clients is $N$.}
    \label{tab:heterogeneous_cnn_architectures}
    \begin{tabular}{@{}llccc@{}}
        \toprule
        \textbf{Dataset} & \textbf{Characteristic} & \textbf{Small} & \textbf{Medium} & \textbf{Large} \\
        \midrule
        \multirow{6}{*}{R-MNIST}
        & Conv Layers       & 1               & 2               & 2              \\
        & Conv Filters/Layer& (16)            & (24, 48)        & (32, 64)       \\
        & FC Layers         & 1               & 1               & 2              \\
        & Hidden FC Units   & None            & None            & 120            \\
        & Approx. Params\textsuperscript{*}    & $23\times10^3$      & $121\times10^3$      & $212\times10^3$    \\
        \midrule
        \multirow{6}{*}{H-CIFAR-10}
        & Conv Layers       & 1               & 2               & 2              \\
        & Conv Filters/Layer& (16)            & (24, 48)        & (32, 64)       \\
        & FC Layers         & 1               & 1               & 2              \\
        & Hidden FC Units   & None            & None            & 120            \\
        & Approx. Params\textsuperscript{*}    & $28\times10^3$    & $154\times10^3$    & $297\times10^3$   \\
        \midrule
        \multicolumn{2}{@{}l}{Client Allocation per Architecture} & $\approx N/3$ & $\approx N/3$ & $\approx N/3$ \\
        \bottomrule
        \multicolumn{5}{l}{\textsuperscript{*}\footnotesize{Approximate parameters (weights and biases) calculated based on standard inputs:}} \\
        \multicolumn{5}{l}{\footnotesize{~~R-MNIST ($28\times28\times1$), H-CIFAR-10 ($32\times32\times3$). Values rounded.}} \\
    \end{tabular}
\end{table}

\begin{figure}[t]
\centering
\begin{subfigure}[b]{\textwidth}
  \centering
  \includegraphics[scale=0.3]{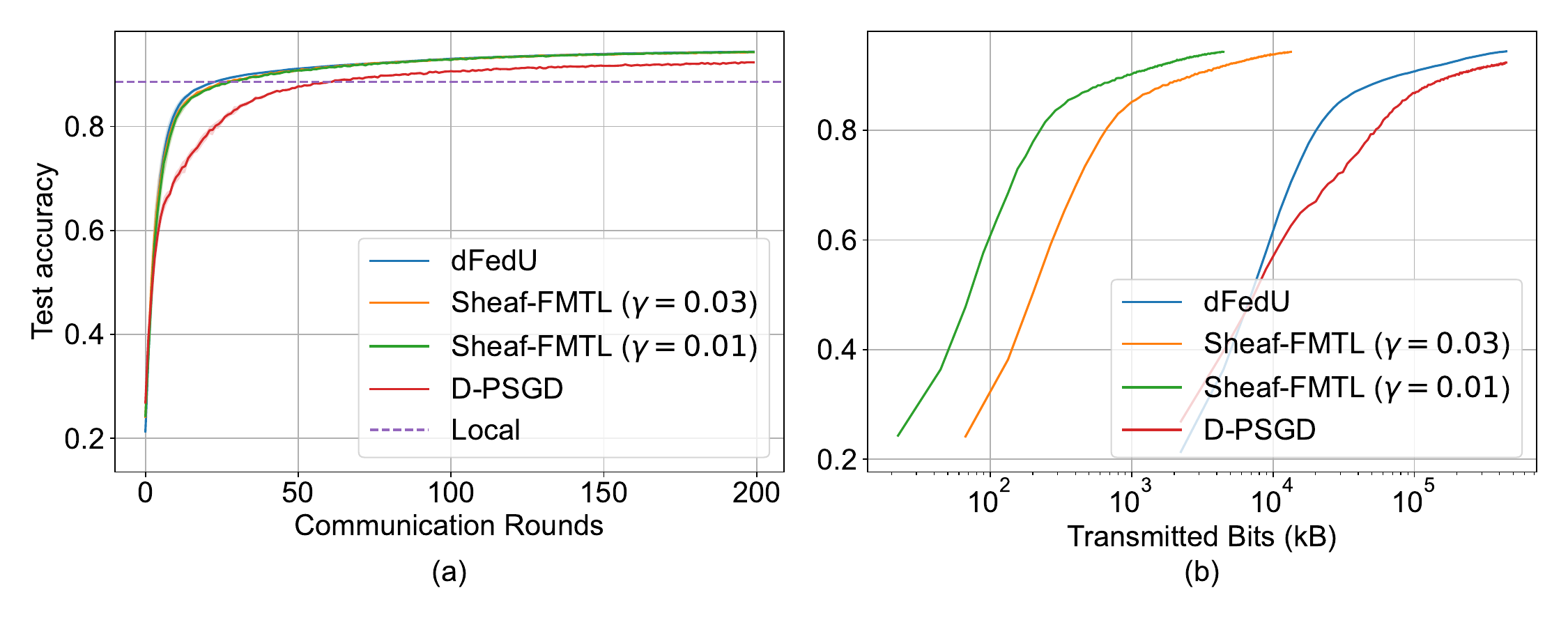}
\end{subfigure}%
\hfill
\begin{subfigure}[b]{\textwidth}
  \centering
  \includegraphics[scale=0.3]{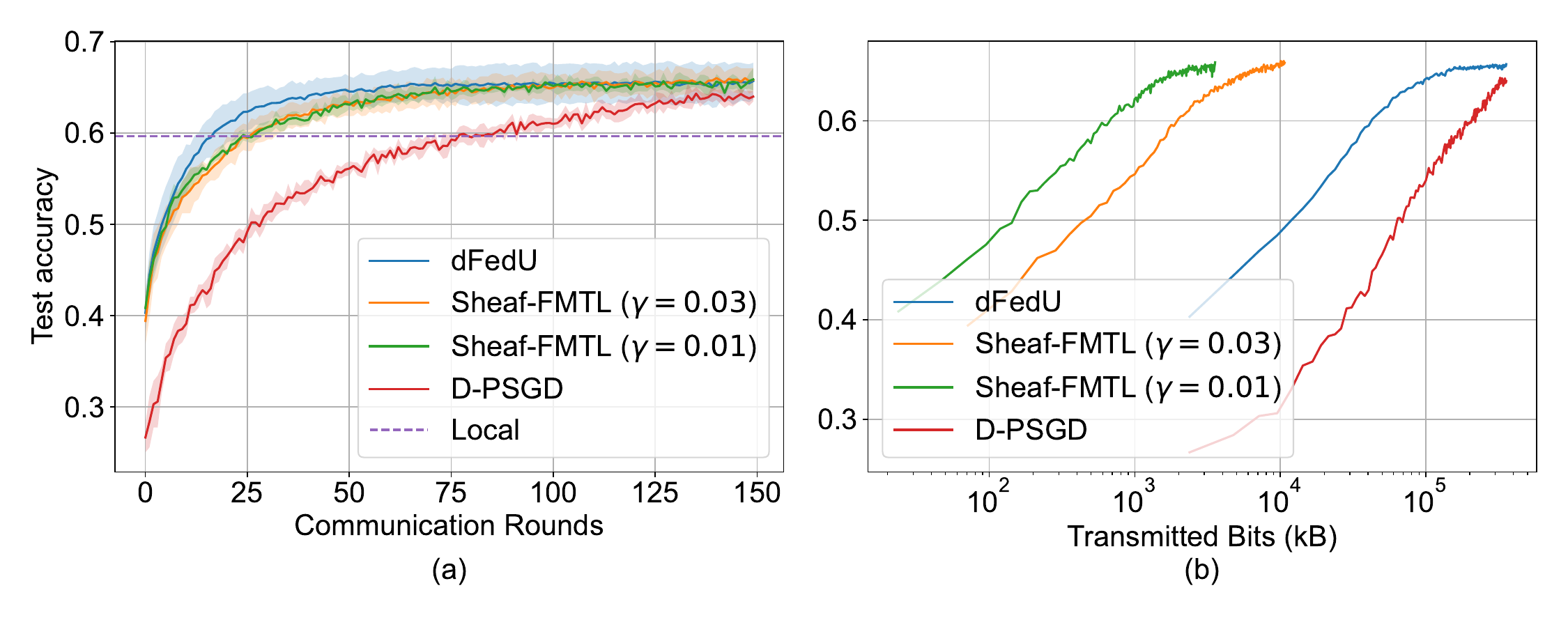}
\end{subfigure}%
\caption{Test accuracy as a function of the number of communication rounds and the number of transmitted bits for the rotated MNIST dataset (top), and the heterogeneous CIFAR-10 dataset (bottom).}
\label{har_school_datasets}
\end{figure}

\textbf{Baselines.} In the first experiment, we first compare our algorithm to the dFedU algorithm \citep{dinh2022new} as well as D-PSGD \citep{lian2017can}, and local training. Then, we provide more results by comparing {\ours} to state-of-the-art FL algorithms \citep{mcmahan2017communication, li2021ditto, fallah2020personalized, t2020personalized}. We implement our proposed algorithm using a mini-batch stochastic gradient in (\ref{thetaupdate}) to make the comparison fair. For dFedU, the weights $\{a_{ij}\}$, defined in (\ref{eqn: conventional FMTL objective}), are taken to be $a_{ij} = 1, ~\forall (i, j) \in \mathcal{E}$. In the second one, we compare to a stand-alone baseline where each client trains on each local dataset without communicating with the rest of the clients. To the best of our knowledge, {\ours} is the only algorithm solving the FMTL problem over decentralized topology with the clients having different model sizes, hence the comparison to a stand-alone baseline in the second experiment.\\ 
\textbf{Hardware and code.} Our experiments were carried out on a system equipped with an Intel(R) Xeon(R) CPU operating at 2.20GHz with 2 cores and 12 GB of RAM. The algorithms are implemented in Python using PyTorch \citep{paszke2019pytorch}, and NetworkX \citep{hagberg2008exploring}.
\subsection{Experiment 1: same model size}\label{experiment1}
Figure \ref{har_school_datasets} illustrates the performance of the proposed {\ours} algorithm and dFedU on the rotated MNIST and heterogeneous CIFAR-10 datasets, respectively, showcasing the test accuracy as a function of the number of communication rounds and the total number of transmitted bits. We consider two values for $\gamma = \{0.01, 0.03\}$ such that the projection space dimension is $\gamma d$. In Figure \ref{har_school_datasets}(a), we can see that {\ours} achieves similar test accuracies as dFedU. For both datasets, {\ours} ($\gamma = 0.01$) and dFedU converge significantly faster than D-PSGD and Local training. On rotated MNIST, {\ours} achieves 85\% accuracy within 15 rounds, while D-PSGD requires approximately twice the number of rounds to reach comparable performance. On rotated MNIST, {\ours} and dFedU both achieve final test accuracies above 94\%, outperforming D-PSGD ($\sim 89\%$) and Local training ($\sim 88\%$). The performance gap is even more pronounced on Heterogeneous CIFAR-10, where {\ours} and dFedU reach $\sim 66$ accuracy while D-PSGD struggles to exceed 55\%. On the other hand, Figure \ref{har_school_datasets}(b) shows that {\ours} achieves higher test accuracy with fewer transmitted bits compared to the baselines, demonstrating its ability to learn effectively while minimizing communication overhead. For the rotated MNIST dataset, using $\gamma = 0.01$ leads to almost similar test accuracy as dFedU, while it requires $100 \times$ less in terms of the number of transmitted bits to achieve this accuracy. For the Heterogeneous CIFAR-10, {\ours} requires slightly more communication rounds than dFedU, but the benefit in terms of communication overhead is evident as it requires exchanging significantly fewer bits. 

\begin{table}[t]
\centering
\caption{
  Comparative analysis of storage, computational overheads, and communication costs.
  }
\label{table:costs} 
\vspace{0.5em}
\begin{tabular}{@{}lccc@{}}
\toprule
\begin{tabular}[c]{@{}c@{}} \textbf{Method}\\ \textbf{ } \end{tabular} & \begin{tabular}[c]{@{}c@{}} \textbf{Storage}\\ \textbf{per client}\end{tabular} & \begin{tabular}[c]{@{}c@{}} \textbf{Compute per }\\ \textbf{iteration per client}\end{tabular} & \begin{tabular}[c]{@{}c@{}} \textbf{Communication per }\\ \textbf{iteration per client}\end{tabular}  \\ \midrule
{\ours} & $d_i + \sum\limits_{j \in \mathcal{N}_i} d_{ij} \times d_i$ & $\mathcal{O}(d_i \times d_{ij})$ & $\sum\limits_{j \in \mathcal{N}_i} d_{ij}$  \\ \midrule
dFedU &  $d_i$ & $\mathcal{O}(d_i)$ & $\sum\limits_{j \in \mathcal{N}_i} d_{j}$ \\ \bottomrule
\end{tabular}
\end{table}
\begin{figure}[t]
\centering
\includegraphics[scale=0.35]{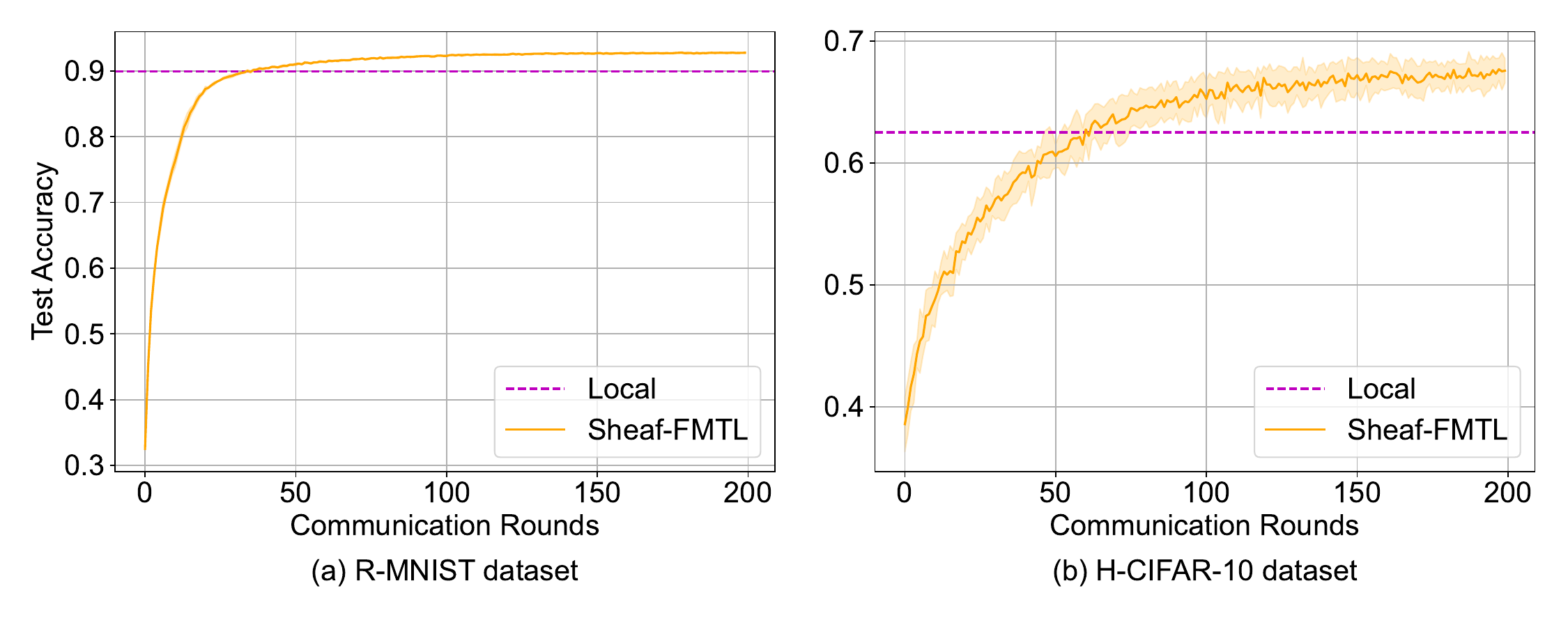}
\caption{Test accuracy as a function of the communication rounds for  (a) R-MNIST dataset and (b) H-CIFAR-10 dataset.}
\label{fig:experiment2}
\end{figure}

\begin{figure}[t]
\centering
\includegraphics[scale=0.35]{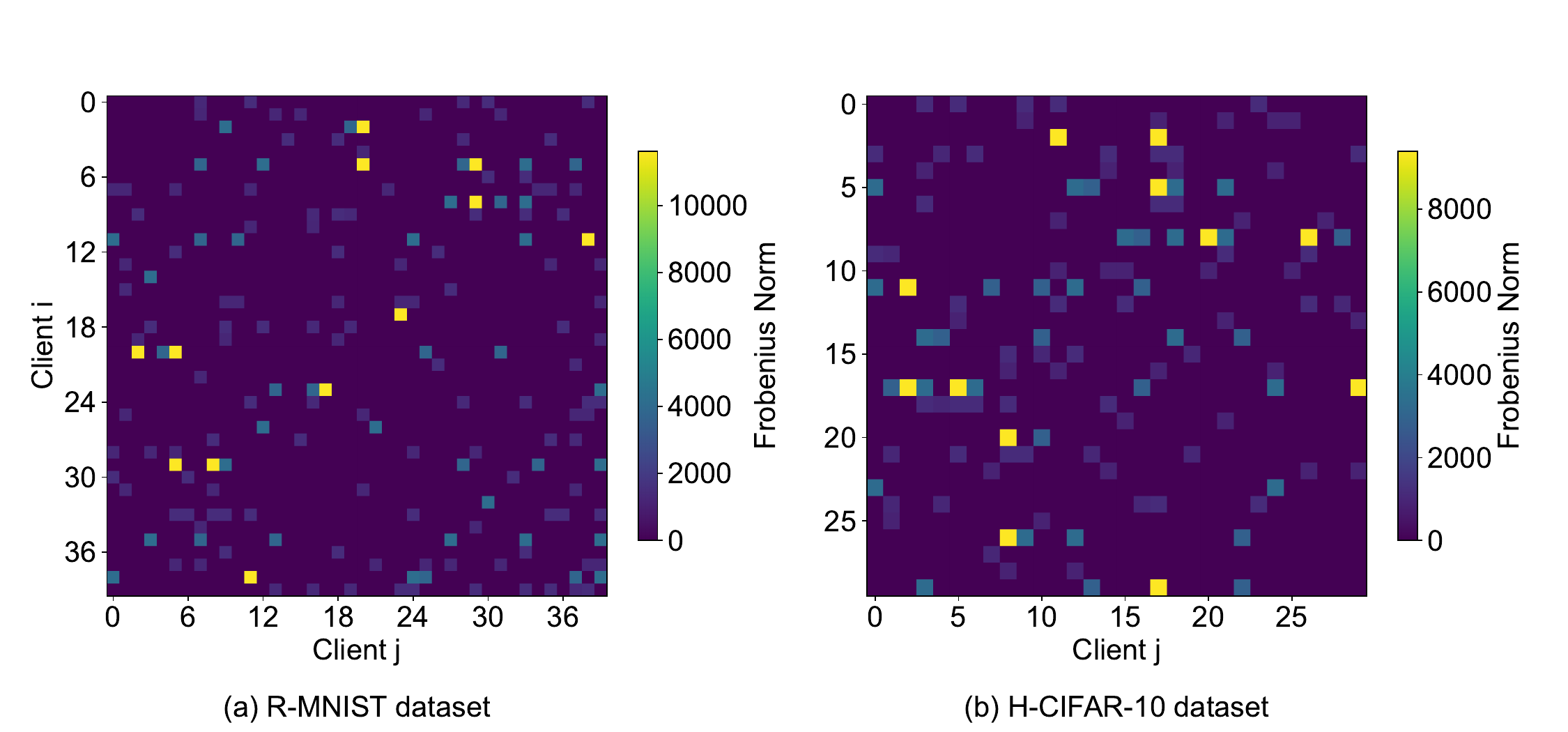}
\caption{Heat maps of the Frobenius norms of the final restriction maps for  (a) R-MNIST dataset and (b) H-CIFAR-10 dataset.}
\label{fig:experiment22}
\end{figure}

As illustrated in Table \ref{table:costs}, {\ours} incurs additional storage and computational costs due to the maintenance and training of restriction maps. Specifically, each restriction map requires storing a matrix of size $d_{ij} \times d_i$, leading to a cumulative storage requirement of $\mathcal{O}(|\mathcal{E}| \times d_{ij} \times d_i)$ across the network. Computationally, updating these maps involves matrix multiplications and gradient calculations, adding a complexity of $\mathcal{O}( d_{ij} \times d_i)$ per edge per iteration. However, these costs are significantly offset by the substantial communication savings achieved, particularly when $d_{ij}$ is chosen to be a small fraction of $d_i$, making {\ours} a viable option in resource-rich FL environments such as cross-silo FL settings. Furthermore, our analysis in Table \ref{table:new} shows that the actual memory footprint, while potentially higher than the simplest baselines, remains within reasonable limits compared to the baselines.

In addition to FedAvg and D-PSGD, we compare the performance of our approach to three state-of-the-art PFL algorithms: DITTO, pFedMe, and Per-FedAvg. These PFL algorithms are well-regarded in the literature for addressing heterogeneity in FL environments. To evaluate the performance of our method against these baselines, we conducted additional experiments on two datasets: Vehicle Sensor and HAR. Table \ref{table:new} measures the performance of various federated learning algorithms across four critical dimensions: test accuracy (model effectiveness), transmitted bits in megabytes (communication efficiency), floating-point operations in trillion (computational complexity), and memory usage in megabytes (resource requirements) on both the Vehicle Sensor and Human Activity Recognition datasets. The test accuracy is averaged over 5 runs, reporting both mean and standard deviation, FLOPS represents the total number of floating-point operations (in trillion) required for each algorithm to converge, and the memory column shows the peak memory consumption in megabytes during the training process. The measurement captures the maximum memory footprint. For the rotated MNIST dataset, {\ours}  achieves the highest test accuracy ($94.5\%$) regardless of the parameter $\gamma$, surpassing both decentralized algorithms like D-PSGD ($92.2\%$) and centralized approaches like pFedme ($90.05\%$). Notably, {\ours}  maintains this superior performance while requiring only 38MB of communication (at $\gamma=0.1$), which is about $100\times$ lower than dFedU (3230.9KB) and comparable to FedAvg. On the heterogeneous CIFAR-10 dataset, {\ours}  shows competitive accuracy ($65.8\%$ at $\gamma=0.3$) compared to dFedU ($65.7\%$) while requiring about $100\%$ less communication overhead. As expected, the PFL baselines (DITTO, pFedMe, Per-FedAvg) and FedAvg achieve lower communication overheads due to their star topology, leveraging a central parameter server. The computational efficiency of {\ours} is particularly evident when compared to personalization-focused algorithms such as pFedme, which demands approximately $24\times$ more floating-point operations on the heterogeneous CIFAR-10 dataset. This substantial improvement in the computation-communication trade-off can be attributed to our algorithm's ability to learn task relationships through flexible sheaf structures rather than enforcing rigid constraints across all clients. The parameter $\gamma$ effectively controls this trade-off, with higher values yielding marginal accuracy improvements at the cost of increased communication overhead. These results confirm that {\ours} successfully balances personalization, communication efficiency, and computational complexity in heterogeneous federated environments.
\subsection{Experiment 2: different model sizes}\label{experiment2}
Figure \ref{fig:experiment2} compares the performance of the proposed {\ours} algorithm with the local training, i.e., training each model independently without communication with other clients, on the rotated MNIST and heterogeneous CIFAR-10 datasets by plotting the test accuracy as a function of the number of communication rounds. The horizontal dashed line (Local baseline) in each subplot represents the converged performance achievable when clients train solely on their own data without any communication, serving as a reference for the efficacy of collaboration. For both datasets, clients train different CNN architectures as per Table \ref{tab:heterogeneous_cnn_architectures}. Subplot \ref{fig:experiment2}~(a) presents the results on the rotated MNIST dataset, where heterogeneity arises from concept shift (different rotations applied to client data). {\ours} demonstrates rapid convergence, surpassing the local baseline accuracy of approximately $90\%$ within the initial 40 communication rounds. The algorithm continues to improve, eventually plateauing at a significantly higher test accuracy of around $93\%$, showcasing the substantial advantage gained through collaborative learning facilitated by the sheaf mechanism in this setting.
Subplot \ref{fig:experiment2}~(b) displays the performance on the heterogeneous CIFAR-10 dataset, which exhibits label distribution skew and quantity skew. Here, the task is more challenging due to the increased heterogeneity. While convergence is slower compared to rotated MNIST, {\ours} consistently improves over communication rounds. It surpasses the local baseline accuracy by approximately $62.5\%$ around 60 rounds and reaches a final accuracy of $68\%$

To gain qualitative insights into the interaction structures learned by {\ours} in the presence of architectural heterogeneity, Figure \ref{fig:experiment22} visualizes the Frobenius norms ($\|\bm{P}_{ij}\|_F$) of the learned restriction maps for the rotated MNIST (left) and heterogeneous CIFAR-10 (right) datasets. These plots correspond to the performance results shown in Figure \ref{fig:experiment2}, where clients employed diverse CNN architectures as detailed in Table \ref{tab:heterogeneous_cnn_architectures}. Each pixel $(i, j)$ in the heatmaps represents the magnitude of the learned interaction map $\bm{P}_{ij}$ projecting the model parameters of client $i$ into the shared space with client $j$. Brighter colors indicate a larger norm, signifying a stronger learned coupling or influence between the projected representations of the clients' models. The heatmaps clearly show that the algorithm learns non-trivial interaction maps (non-zero norms) between connected clients. Crucially, the norms vary significantly across different client pairs. This demonstrates that {\ours} does not impose uniform interaction strengths but rather learns them. While many potential interactions exist (off-diagonal entries), the number of pairs exhibiting very high norms (bright yellow spots) is relatively sparse in both datasets. This suggests that the algorithm identifies and emphasizes connections between specific client pairs deemed most beneficial for the joint optimization objective, while maintaining weaker couplings between others.

\begin{table}[t]
    \centering
        \caption{Performance of {\ours} compared to baselines for the rotated MNIST and heterogeneous CIFAR-10 datasets.}
    \label{table:new}
    \vspace{0.5em}
    \begin{tabular}{@{}cccccc@{}}
        \toprule
        \textbf{Dataset} & \textbf{Algorithm} & \begin{tabular}[c]{@{}c@{}} \textbf{Test}\\ \textbf{Accuracy} \end{tabular} & \begin{tabular}[c]{@{}c@{}} \textbf{Transmitted}\\ \textbf{Bits (MB)} \end{tabular} & \begin{tabular}[c]{@{}c@{}} \textbf{FLOPS} \\ ($\times 10^{12}$) \end{tabular}& \begin{tabular}[c]{@{}c@{}} \textbf{Memory} \\ \textbf{(MB)} \end{tabular}\\
        \midrule
        Rotated MNIST & {\ours} ($\gamma$ = 0.01) & 94.3 ± 0.12  & 38.2 & 240.7 & 7563.5\\
        \cline{2-6}
        & {\ours} ($\gamma$ = 0.03) & 94.5 ± 0.08 & 165.7 & 249.3 & 24022.2\\
        \cline{2-6}
        & dFedU & 94.4 ± 0.06  & 3230.9 & 1184.4 & 1582.9 \\
        \cline{2-6}
        & D-PSGD & 92.22 ± 0.06  & 3230.9 & 236.8 & 1438.6\\
         \cline{2-6}
        & FedAvg & 81.4 ± 0.16 & 27.9 & 1184.1 & 1531\\
        \cline{2-6}
        & DITTO & 92.2 ± 0.11  & 27.9 & 473.8 & 1714.6 \\
        \cline{2-6}
        & pFedme & 90.05 ± 0.14 & 27.9 & 5923.6 & 1550.7\\
        \cline{2-6}
        & Per-FedAvg & 88.45 ± 0.11  & 27.9 & 202 & 1421\\
        \midrule
        Heterogeneous CIFAR-10 & {\ours} ($\gamma$ = 0.01) & 65.5 ± 1.2  & 41.4 & 282.2 & 9582.5\\
        \cline{2-6}
        & {\ours} ($\gamma$ = 0.03) & 65.8 ± 1.3 & 108.8 & 294.5 & 27452.2\\
        \cline{2-6}
        & dFedU & 65.7 ± 1.8 & 3937.4 & 1376.5 & 9197.6\\
         \cline{2-6}
         & D-PSGD & 64 ± 0.5 & 3937.4 & 207.5 & 8402.4\\
         \cline{2-6}
        & FedAvg & 60.88 ± 1.4 & 25.5 & 1040 & 8431.6\\
        \cline{2-6}
        & DITTO & 63.4 ± 1.1 & 25.5  & 413.4 & 8650.9  \\
        \cline{2-6}
        & pFedme & 62.5 ± 1.3 & 25.5  & 6886.2 & 8496.6 \\
        \cline{2-6}
        & Per-FedAvg & 61.25 ± 1.4  & 25.5  & 216 & 3480 \\
       \bottomrule
    \end{tabular}
\end{table}

\begin{figure}[t]
\centering
\begin{subfigure}[b]{0.3\textwidth}
  \centering
  \includegraphics[scale=0.25]{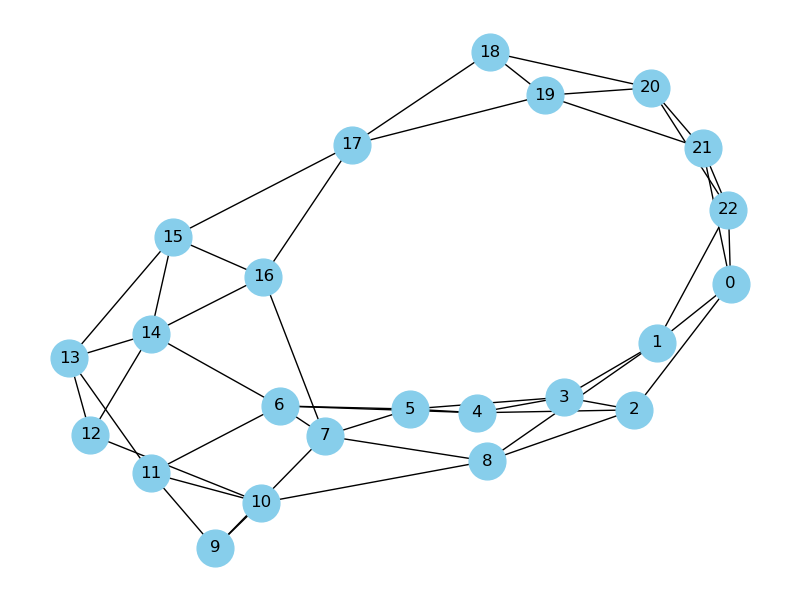}
  \caption{Small world}
  \label{small_world}
\end{subfigure}%
\hfill
\begin{subfigure}[b]{0.3\textwidth}
  \centering
  \includegraphics[scale=0.25]{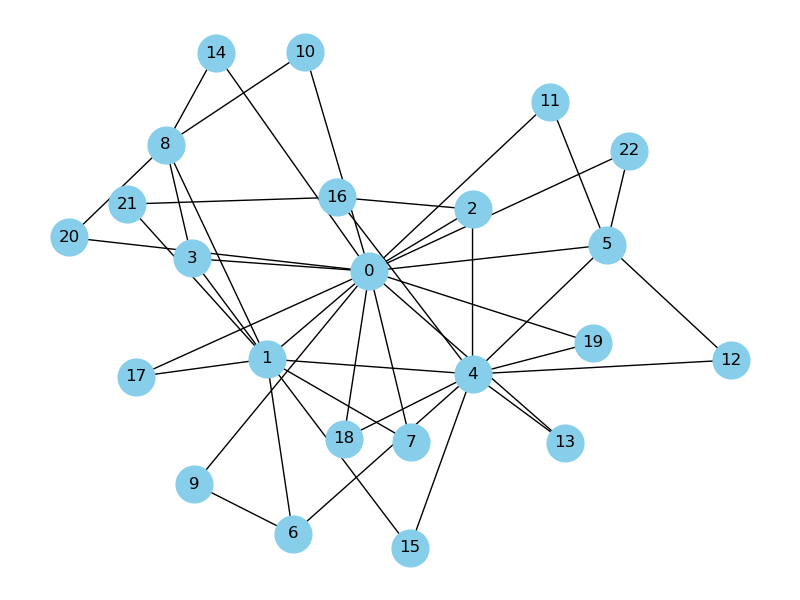}
  \caption{Scale free}
  \label{scale_free}
\end{subfigure}%
\hfill
\begin{subfigure}[b]{0.3\textwidth}
  \centering
  \includegraphics[scale=0.25]{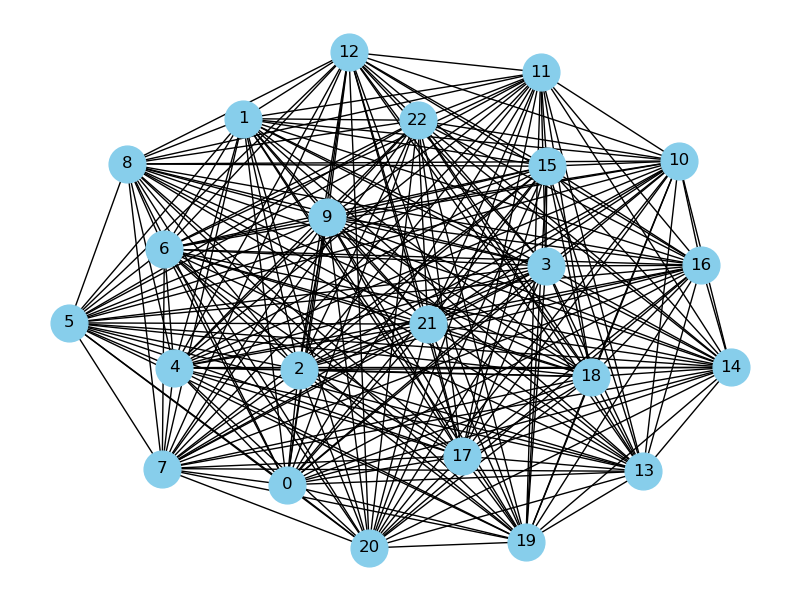}
  \caption{Complete}
  \label{complete}
\end{subfigure}%
\caption{Network topologies used in the regularization experiments: (a) Small-world topology generated using Watts-Strogatz model with $k=4$ neighbors and rewiring probability $p=0.1$, (b) Scale-free topology generated using Barabási-Albert model with $m=2$ attachments per new node, and (c) Complete graph where every client is connected to all others.}
\label{effect}
\end{figure}

\begin{table}[t]
\caption{Performance comparison of {\ours} and dFedU across different network topologies and regularization strengths on the Vehicle dataset.}
\label{tab:my-table}
\begin{tabular}{lccccc}
\toprule
\multirow{2}{*}{\textbf{Topology}} & \multirow{2}{*}{\textbf{\begin{tabular}[c]{@{}c@{}}Regularization \\ Strength\end{tabular}}} & \multicolumn{2}{c}{\textbf{\begin{tabular}[c]{@{}c@{}}{\ours}\\ ($\gamma = 0.1$)\end{tabular}}} & \multicolumn{2}{c}{\textbf{dFedU}} \\ \cmidrule{3-6} 
 &  & \textbf{\begin{tabular}[c]{@{}c@{}}Average Test\\ Accuracy\end{tabular}} & \textbf{\begin{tabular}[c]{@{}c@{}}Transmitted \\ Bits (kB)\end{tabular}} & \textbf{\begin{tabular}[c]{@{}c@{}}Average Test\\ Accuracy\end{tabular}} & \textbf{\begin{tabular}[c]{@{}c@{}}Transmitted \\ Bits (kB)\end{tabular}} \\ \midrule
\multirow{6}{*}{\begin{tabular}[c]{@{}c@{}}Small-world \\ (Fig. \ref{small_world})\end{tabular}} & $\lambda = 10^{-4}$ & 92.38 ± 0.38 & \multirow{6}{*}{162} & 91.79 ± 0.51 & \multirow{6}{*}{808} \\  \cmidrule{2-3} \cmidrule{5-5}
 & $\lambda = 10^{-3}$ & 92.31 ± 0.38 &  & 91.57 ± 0.36 &  \\ \cmidrule{2-3} \cmidrule{5-5}
 & $\lambda = 10^{-2}$ & 92.19 ± 0.40 &  & 91.89 ± 0.44 &  \\ \cmidrule{2-3} \cmidrule{5-5}
 & $\lambda = 10^{-1}$ & 91.63 ± 0.47 &  & 90.65 ± 0.58 &  \\ \cmidrule{2-3} \cmidrule{5-5}
 & $\lambda = 1$ & 89.80 ± 0.73 &  & 84.64 ± 1.72 &  \\ \cmidrule{2-3} \cmidrule{5-5}
 & $\lambda = 10$ & 84.15 ± 1.20 &  & 80.88 ± 0.91 &  \\ \midrule
\multirow{6}{*}{\begin{tabular}[c]{@{}c@{}}Scale-free \\ (Fig. \ref{scale_free})\end{tabular}} & $\lambda = 10^{-4}$ & 92.42 ± 0.40 & \multirow{6}{*}{485} & 91.58 ± 0.43 & \multirow{6}{*}{2424} \\ \cmidrule{2-3} \cmidrule{5-5}
 & $\lambda = 10^{-3}$ & 91.63 ± 0.48 &  & 91.28 ± 0.72 &  \\ \cmidrule{2-3} \cmidrule{5-5}
 & $\lambda = 10^{-2}$ & 92.45 ± 0.24 &  & 91.12 ± 0.20 &  \\ \cmidrule{2-3} \cmidrule{5-5}
 & $\lambda = 10^{-1}$ & 90.19 ± 0.55 &  & 90.54 ± 0.50 &  \\ \cmidrule{2-3} \cmidrule{5-5}
 & $\lambda = 1$ & 89.08 ± 0.75 &  & 85.71 ± 0.30 &  \\ \cmidrule{2-3} \cmidrule{5-5}
 & $\lambda = 10$ & 80.55 ± 1.34 &  & 80.41 ± 0.35 &  \\ \midrule
\multirow{6}{*}{\begin{tabular}[c]{@{}c@{}}Complete  \\ (Fig. \ref{complete})\end{tabular}} & $\lambda = 10^{-4}$ & 92.06 ± 0.67 & \multirow{6}{*}{711} & 88.98 ± 0.46& \multirow{6}{*}{3555} \\ \cmidrule{2-3} \cmidrule{5-5}
 & $\lambda = 10^{-3}$ & 91.06 ± 0.26 &  & 89.60 ± 0.61 &  \\ \cmidrule{2-3} \cmidrule{5-5}
 & $\lambda = 10^{-2}$ & 89.62 ± 0.52 &  & 89.40 ± 0.63 &  \\ \cmidrule{2-3} \cmidrule{5-5}
 & $\lambda = 10^{-1}$ & 89.15 ± 0.56 &  & 89.07 ± 0.34 &  \\ \cmidrule{2-3} \cmidrule{5-5}
 & $\lambda = 1$ & 84.5 ± 0.74&  & 83.67 ± 0.63 &  \\ \cmidrule{2-3} \cmidrule{5-5}
 & $\lambda = 10$ & 78.2 ± 0.45 &  & 80.40 ± 0.35&  \\ \bottomrule
\end{tabular}
\end{table}

\subsection{Ablation Study}\label{ablation}
\subsubsection{Effect of the Regularization Strength}\label{sec:reg}
To investigate the impact of both network topology and regularization strength on our proposed method, we conducted experiments on the Vehicle dataset using three distinct network structures as illustrated in Figure \ref{effect}: small-world, scale-free, and complete graphs. For the small-world topology, we used the Watts-Strogatz model with each node connected to $k=4$ nearest neighbors and a rewiring probability $p=0.1$, creating a network with high clustering and relatively short path lengths. The scale-free network was generated using the Barabási-Albert preferential attachment model with $m=2$ new edges per node, resulting in a power-law degree distribution where some nodes act as hubs with many connections. The regularization parameter $\lambda$ was varied across six orders of magnitude from $10^{-4}$ to $10$ to observe its effect on model performance.

Table \ref{tab:my-table} presents a comprehensive comparison between {\ours} and dFedU across these different configurations. Several key observations emerge from these results. First, {\ours} consistently achieves superior communication efficiency, requiring approximately $5\times$ fewer transmitted bits than dFedU across all topologies. Second, regarding accuracy performance, {\ours} generally outperforms dFedU across most regularization values, with the performance gap becoming more pronounced at higher regularization strengths ($\lambda \geq 1$). For instance, with $\lambda = 1$ in the small-world topology, {\ours} achieves 89.80\% accuracy compared to dFedU's 84.64\%. Among the three topologies, the small-world network provides the best balance between communication efficiency and model performance, requiring the least communication while maintaining comparable or better accuracy than more densely connected topologies.

While our experiments on the Vehicle dataset indicated optimal performance for $\lambda$ in the range $[10^{-4}, 10^{-2}]$ across different topologies, this should not be interpreted as a universal recommendation. The optimal choice of $\lambda$ is highly dependent on factors such as the degree and type of data heterogeneity, network topology, and the data and model size. In practice, $\lambda$ should be treated as a hyperparameter tuned using a validation set (typically a held-out portion of each client's local data). We recommend searching over a logarithmic scale (e.g.,  $[10^{-5}, 10]$), noting that excessively large values ($\lambda \geq 1$) can overly constrain personalization and harm performance.

\subsubsection{Effect of the Restriction Maps Initialization}\label{sec:maps}

\begin{table}[t]
\centering
\caption{Final average test accuracy of {\ours} for different initialization strategies of the restriction maps $\bm{P}_{ij}$ under heterogeneous CNN architectures (Table~\ref{tab:heterogeneous_cnn_architectures}).}
\label{tab:initialization_ablation}
\sisetup{table-format=1.4} 
\begin{tabular}{@{}l S[table-format=1.2, table-alignment=center] S S@{}} 
\toprule
\textbf{Initialization Method} & {\textbf{Parameter}} & {\textbf{R-MNIST}} & {\textbf{H-CIFAR-10}} \\
\midrule
\multirow{3}{*}{Gaussian~$\mathcal{N}(0, \sigma^2)$}
 & {$\sigma = 0.01$} & 0.9280 & 0.6767 \\
 & {$\sigma = 0.1$}  & 0.9279 & 0.6722 \\
 & {$\sigma = 1.0$}  & 0.9277 & 0.6815 \\
\midrule
\multirow{3}{*}{Uniform~ $\mathcal{U}([-a, a])$}
 & {$a = 0.01$} & \bfseries 0.9288 & \bfseries 0.6842 \\ 
 & {$a = 0.1$}   & 0.9268 & 0.6759 \\
 & {$a = 1$}       & 0.9277 & 0.6733 \\
\midrule
Orthogonal & {N/A} & 0.9273 & 0.6746 \\
\midrule
Identity + Noise & {$\sigma_{noise}=0.01$} & 0.9268 & 0.6624 \\
\bottomrule
\end{tabular}
\end{table}

We conduct an ablation study to assess the sensitivity of the {\ours} algorithm to the initialization strategy of the restriction maps $\bm{P}_{ij}$, particularly within the challenging context of heterogeneous model architectures across clients. Using the CNN architectures specified in Table \ref{tab:heterogeneous_cnn_architectures} for rotated MNIST and heterogeneous CIFAR-10 datasets, we evaluated several initialization methods
\begin{itemize}
    \item \textbf{Gaussian Initialization:} Each element of $\bm{P}_{ij}$ is drawn independently from a normal distribution $\mathcal{N}(0, \sigma^2)$, with variances $\sigma \in \{0.01, 0.1, 1\}$.
    \item \textbf{Uniform Initialization:} Each element was drawn independently from a uniform distribution $\mathcal{U}([-a, a])$, with ranges corresponding to $a \in \{0.01, 0.1, 1.0\}$.
    \item \textbf{Orthogonal Initialization:} $\bm{P}_{ij}$ was initialized as a matrix with orthonormal rows, ensuring $\bm{P}_{ij} \bm{P}_{ij}^T = \bm{I}_{d_{ij}}$.
    \item \textbf{Identity + Noise Initialization:} $\bm{P}_{ij}$ was initialized based on the identity matrix (selecting the first $d_{ij}$ rows of the $d_i \times d_i$ identity, padded with zeros if needed), with small Gaussian noise ($\mathcal{N}(0, 0.01^2)$) added to break perfect symmetry and allow adaptation. 
\end{itemize}
The results, presented in Table~\ref{tab:initialization_ablation}, measure the final average test accuracy achieved after convergence.

On the rotated MNIST dataset, the performance of {\ours} exhibits remarkable robustness to the choice of initialization. Across all tested strategies, the final accuracy remains tightly clustered between approximately $92.68\%$ and $92.88\%$. The best performance was marginally achieved with a narrow Uniform initialization ($\mathcal{U}([-0.01, 0.01])$). Still, the minimal variation suggests that the algorithm effectively converges to a high-quality solution regardless of the starting point for $\bm{P}_{ij}$ in this setting.

For the heterogeneous CIFAR-10 dataset, which presents greater data and architectural heterogeneity, we observe slightly more variation, though the overall stability is still noteworthy. The final accuracy ranges from $66.24\%$ to $68.42\%$. Similar to rotated MNIST, the narrow Uniform initialization ($\mathcal{U}([-0.01, 0.01])$) yielded the highest accuracy. Notably, the (Identity + Noise) initialization resulted in the lowest accuracy ($66.24\%$), suggesting that starting with projections aligned to canonical parameter subsets might be less effective than random or orthogonal initializations when dealing with complex models and diverse data distributions inherent in the heterogeneous CIFAR-10.

\section{Conclusion} \label{conclusion}
In this work, we introduced a novel sheaf-based framework for FMTL that effectively tackles challenges arising from data and sample heterogeneity across clients. By leveraging cellular sheaves, our approach can flexibly model complex interactions between client models, even with varying feature spaces and model sizes. While {\ours} introduces additional memory and computational overhead for managing the sheaf structure, this is often offset by significant communication savings, the ability to handle heterogeneous model architectures and providing a unified view subsuming various existing FL methods, making it a compelling approach, particularly in cross-silo FL settings. Theoretically, we analyzed the convergence properties of {\ours}, establishing a sublinear convergence rate in line with state-of-the-art decentralized FMTL algorithms. Empirically, extensive experiments on benchmark datasets demonstrated the communication savings of {\ours} compared to the state-of-the-art baselines.

\bibliography{main}
\bibliographystyle{tmlr}

\newpage
\appendix
\section{Sheaf-Theoretic Approach in FMTL}
\label{app:sheaf_theory}
\subsection{Rationale for Adopting Sheaf Theory in FMTL}
\label{subsec:rationale_sheaf}
Sheaf theory provides a powerful mathematical framework for modelling and analyzing complex relationships in FMTL. The adoption of this approach in our context is motivated by several key advantages
\begin{enumerate}
    \item \textbf{Heterogeneity modeling.} FMTL often involves clients with different data distributions, model architectures, or task objectives. Sheaf theory allows us to capture these heterogeneous relationships in a structured and mathematically rigorous manner.
    \item \textbf{Local-Global consistency.} Sheaves provide a natural way to ensure consistency between local (client-specific) and global (network-wide) information. This is crucial in FMTL scenarios where we aim to leverage network information to improve local performance. 
    \item \textbf{Flexible representation.} The sheaf structure allows for representing varying degrees of similarity or difference between clients. This nuanced representation is more sophisticated than traditional approaches that often assume uniform relationships across the network.
\end{enumerate}
\subsection{The Interaction Space and Client Relationships}
\label{subsec:interaction_space}
The interaction space, denoted as $\mathcal{F}(e)$, plays a central role in our sheaf-theoretic approach to FMTL. It serves as a shared space where local models $\bm{\theta}_i$ and $\bm{\theta}_j$ are projected using restriction maps $\bm{P}_{ij}$ and $\bm{P}_{ji}$, respectively. The projection into this interaction space provides a measure of client relationships for the following reasons
\begin{enumerate}
    \item \textbf{Common feature capture.} The interaction space captures the common or comparable features between clients, analogous to how principal component analysis (PCA) captures the most important features of a dataset.
    \item \textbf{Consistency enforcement.} Our approach enforces consistency between the projections of local models onto the interaction space. This is mathematically represented by the sheaf Laplacian term $\frac{\lambda}{2} \bm{\theta}^T L_{\mathcal{F}}(\bm{P}) \bm{\theta}$, which penalizes discrepancies between the projections of local models.
    \item \textbf{Collaboration encouragement.} By minimizing the sheaf Laplacian term, local models are encouraged to collaborate effectively, leveraging shared information to improve overall performance.
\end{enumerate}
\subsection{Restriction Maps and Their Interpretations}
\label{subsec:restriction_maps}
The restriction maps, represented by matrices $\bm{P}_{ij}$, are fundamental to our sheaf-theoretic approach. These maps project local models $\bm{\theta}_i$ onto the interaction space $\mathcal{F}(e)$. The intuition behind these maps can be understood as follows
\begin{enumerate}
    \item \textbf{Feature selection.} $\bm{P}_{ij}$ acts as a feature selection matrix, identifying common or comparable features between clients $i$ and $j$.
    \item \textbf{Information sharing.} The restriction maps facilitate information sharing between clients by projecting local models onto a common space, enabling effective collaboration even when local models have different dimensions or feature sets.
    \item \textbf{Model comparison.} In heterogeneous settings where clients have different model sizes, traditional FL methods relying on direct model aggregation or comparison fail. The restriction maps allow for meaningful comparisons by projecting onto a common space.
\end{enumerate}
\subsection{Dimensional Considerations and Trade-offs}
\label{subsec:dimensions}
The dimensions of the restriction map $\bm{P}_{ij}$ are determined by the dimensions of the local model $\bm{\theta}_i$ and the interaction space $\mathcal{F}(e)$. If $\bm{\theta}_i \in \mathbb{R}^{d_i}$ and the interaction space has dimension $d_{ij}$, then $\bm{P}_{ij}$ is of size $d_{ij} \times d_i$. The choice of $d_{ij}$ involves a trade-off
\begin{itemize}
    \item \textbf{Smaller $d_{ij}$:} Results in a more compact representation of shared information, leading to communication savings.
    \item \textbf{Larger $d_{ij}$:} Allows for more flexibility in capturing relationships between local models but increases communication costs.
\end{itemize}
In practice, the choice of $d_{ij}$ can be guided by factors such as the estimated overlap in feature spaces between clients, computational resources available, and the desired balance between model expressiveness and communication efficiency.
\subsection{Comparative Advantages over Traditional FMTL Methods}
\label{subsec:advantages}
Our sheaf-theoretic approach offers several advantages over traditional FMTL methods
\begin{enumerate}
    \item \textbf{Heterogeneity handling.} Unlike many traditional FMTL methods that assume homogeneous models across clients, our approach naturally accommodates heterogeneous model architectures and task objectives.
    \item \textbf{Nuanced relationships.} Traditional methods often assume uniform relationships between clients. Our approach allows for more nuanced modelling of inter-client relationships through the interaction space and restriction maps.
\end{enumerate}
\section{Interpretation of $d_{ij}$ and $\textit{\textbf{P}}_{ij}$} \label{appendix:discussion}
In this appendix, we provide an interpretation of the restriction maps $\bm{P}_{ij} = \mathcal{F}_{i \trianglelefteq e}$ for the case when the local models are given by linear or logistic regression. We describe how to choose $d_{ij}$ and $\bm{P}_{ij}$ in a meaningful way and the constraints that can be imposed on them.

Consider a network of $N$ clients, where each client $i$ has a local model parameterized by $\bm{\theta}_i \in \mathbb{R}^{d_i}$. In the case of linear regression, the local model of client $i$ is given by $f_i(\bm{\theta}_i; \bm{x}_i) = \bm{\theta}_i^T \bm{x}_i$, where $\bm{x}_i \in \mathbb{R}^{d_i}$ is the input feature vector. For logistic regression, the local model is given by $f_i(\bm{\theta}_i; \bm{x}_i) = \sigma(\bm{\theta}_i^T \bm{x}_i)$, where $\sigma(\cdot)$ is the sigmoid function.

\textbf{Interpretation of $\bm{P}_{ij}$.} Consider two clients $i, j \in \mathcal{V}$ such that $e = (i, j) \in \mathcal{E}$. The restriction maps $\bm{P}_{ij}$ and $\bm{P}_{ji}$ aim to capture the relationships between the local models $\bm{\theta}_i$ and $\bm{\theta}_j$ by projecting them to a common interaction space $\mathcal{F}(e)$. In the context of linear or logistic regression, $\bm{P}_{ij}$ and $\bm{P}_{ji}$ can be interpreted as feature selection matrices that identify the common or comparable features between the two clients.

Let $\bm{P}_{ij} \in \mathbb{R}^{d{ij} \times d_i}$ and $\bm{P}_{ji} \in \mathbb{R}^{d{ij} \times d_j}$ be the restriction maps for clients $i$ and $j$, respectively, where $d_{ij} = \dim \mathcal{F}(e)$ is the dimension of the interaction space. The restriction maps should satisfy the following condition
\begin{align}
\bm{P}_{ij} \bm{\theta}_i \approx \bm{P}_{ji} \bm{\theta}_j,
\label{eqn: projection of models}
\end{align}
which ensures that the projected models in the interaction space are comparable.

To impose the condition in (\ref{eqn: projection of models}), we consider the following regularizer term
\begin{align}
Q_{ij} = \left\lVert \bm{P}_{ij} \bm{\theta}_i - \bm{P}_{ji} \bm{\theta}_j \right\rVert^2,
\label{eqn: regularizer}
\end{align}
which we aim to minimize. By adding the regularizer terms for all neighboring clients and then summing over all clients, we obtain the overall regularizer
\begin{align}
Q(\bm{\theta}) = \sum_{i=1}^N \sum_{j \in \mathcal{N}_i} \left\lVert \bm{P}_{ij} \bm{\theta}_i - \bm{P}_{ji} \bm{\theta}_j \right\rVert^2,
\label{eqn: overall regularizer}
\end{align}
which is exactly the sheaf quadratic form for the choice $\mathcal{F}_{i \trianglelefteq e} = \bm{P}_{ij}$ for $e = (i, j) \in \mathcal{E}$.\\
\textbf{Choice of $d_{ij}$.} The dimension of the interaction space, $d_{ij}$, determines the size of the restriction maps $\bm{P}_{ij}$ and $\bm{P}_{ji}$. In practice, $d_{ij}$ can be chosen based on the number of common or comparable features between clients $i$ and $j$. A smaller value of $d_{ij}$ implies a more compact representation of the shared information between the two clients, while a larger value allows for more flexibility in capturing the relationships between the local models.

\section{Real-World Applications of {\ours}}\label{appendix:vector_space_scenarios}
In this appendix, we explore practical scenarios where task similarities are inherently defined within vector spaces, making them well-suited for the application of {\ours}. By examining representation learning and feature vector similarities in multi-modal tasks, we illustrate how our sheaf-theoretic framework effectively captures and leverages complex task relationships. These examples showcase the applicability of {\ours} in diverse FL environments.
\subsection{Representation Learning and Embedding Spaces}
In many ML applications, tasks are associated with high-dimensional data that can be effectively represented through embeddings in vector spaces. These embeddings capture semantic, syntactic, or feature-based relationships between tasks, facilitating the modeling of task similarities as vector operations. For example, if we consider the Natural Language Processing (NLP) field, then tasks such as sentiment analysis, topic classification, and named entity recognition can be embedded in a semantic space using techniques like Word2Vec or BERT. The proximity of these task vectors in the embedding space reflects their semantic relatedness. Similar NLP tasks often share underlying linguistic structures. By representing these tasks as vectors, {\ours} can capture and leverage their shared characteristics to enhance collaborative learning.
\subsection{Feature Vector Similarities in Multi-Modal Tasks}
In multi-modal learning scenarios, tasks often involve integrating and processing data from different modalities (e.g., text, image, audio). Task similarities can be defined based on the feature vectors extracted from these modalities, enabling {\ours} to model interactions across diverse data sources. For example, if we consider multi-modal sentiment analysis, then tasks that analyze sentiment from text, images, and audio can have their respective feature vectors embedded in a unified vector space. The similarities between these feature vectors can indicate shared sentiment characteristics across modalities. {\ours} can utilize these vector similarities to facilitate collaborative learning, enhancing sentiment detection accuracy by leveraging cross-modal information. Another example is healthcare applications, where tasks involving the analysis of patient data from various sources (e.g., medical imaging, electronic health records, genomic data) can define task similarities based on the integrated feature vectors representing different data modalities. By modeling these similarities in vector space, {\ours} can improve personalized treatment recommendations through effective knowledge sharing across related healthcare tasks.
\section{Supporting lemmas}\label{suppLemmas}
\begin{lemma}
\begin{align}
    \bm{\theta}^T L_\F \, \bm{\theta} = \sum_{e=(i,j) \in \mathcal{E}} \left\lVert \F_{i\trianglelefteq e}\left( \bm{\theta}_i \right)  -  \F_{j\trianglelefteq e}\left( \bm{\theta}_j \right) \right\rVert^2 = \sum_{e=(i,j) \in \mathcal{E}} \left\| \bm{P}_{ij} \bm{\theta}_{i} - \bm{P}_{ji} \bm{\theta}_{j} \right\|^2.
\end{align}
\end{lemma}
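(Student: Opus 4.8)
The plan is to exploit the factorization $L_\F = \delta^* \circ \delta$ of the sheaf Laplacian, whose matrix form was identified in the text as $L_\F = \bm{P}^T \bm{P}$ with $\bm{P}$ the co-boundary matrix. This reduces the left-hand side to a single squared Euclidean norm: writing $\bm{\theta}^T L_\F\, \bm{\theta} = \bm{\theta}^T \bm{P}^T \bm{P}\, \bm{\theta} = (\bm{P}\bm{\theta})^T (\bm{P}\bm{\theta}) = \lVert \delta(\bm{\theta}) \rVert^2$, since $\bm{P}\bm{\theta}$ is precisely the matrix representation of $\delta(\bm{\theta}) \in C^1(\F)$. The task then becomes one of unpacking this norm over the edge-indexed blocks of $C^1(\F)$.

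First I would use the orthogonal block decomposition $C^1(\F) = \bigoplus_{e \in \mathcal{E}} \F(e)$, under which the squared norm splits as $\lVert \delta(\bm{\theta}) \rVert^2 = \sum_{e \in \mathcal{E}} \lVert \delta(\bm{\theta})_e \rVert^2$. Next I would substitute the definition of the co-boundary on each edge: for the fixed orientation with $e = (i,j)$, $e^+ = j$, $e^- = i$, one has $\delta(\bm{\theta})_e = \F_{j \trianglelefteq e}(\bm{\theta}_j) - \F_{i \trianglelefteq e}(\bm{\theta}_i) = \bm{P}_{ji}\bm{\theta}_j - \bm{P}_{ij}\bm{\theta}_i$, so that $\lVert \delta(\bm{\theta})_e \rVert^2 = \lVert \bm{P}_{ji}\bm{\theta}_j - \bm{P}_{ij}\bm{\theta}_i \rVert^2$.

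The final step is to note that the squared norm is invariant under a global sign flip of its argument, giving $\lVert \bm{P}_{ji}\bm{\theta}_j - \bm{P}_{ij}\bm{\theta}_i \rVert^2 = \lVert \bm{P}_{ij}\bm{\theta}_i - \bm{P}_{ji}\bm{\theta}_j \rVert^2 = \lVert \F_{i \trianglelefteq e}(\bm{\theta}_i) - \F_{j \trianglelefteq e}(\bm{\theta}_j) \rVert^2$; summing over all edges yields both claimed expressions. The only genuine subtlety, and the step I would watch most carefully, is that the orientation is an auxiliary choice entering $\delta$, whereas the right-hand side is written orientation-free. The sign-flip invariance is exactly what guarantees the final value is independent of the chosen orientation, confirming the remark made earlier that the construction is not subjective to the choice of orientation.

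As an alternative that bypasses the factorization, I would expand $\bm{\theta}^T L_\F\, \bm{\theta} = \sum_{i,j} \bm{\theta}_j^T L_{j,i}\, \bm{\theta}_i$ directly from the block form in \eqref{eqn: Laplacian_matrix_form}: the diagonal blocks $\sum_{j \in \mathcal{N}_i} \bm{P}_{ij}^T \bm{P}_{ij}$ contribute $\sum_{(i,j)\in\mathcal{E}} \bigl(\lVert \bm{P}_{ij}\bm{\theta}_i \rVert^2 + \lVert \bm{P}_{ji}\bm{\theta}_j \rVert^2\bigr)$, while the off-diagonal blocks $-\bm{P}_{ji}^T \bm{P}_{ij}$, summed over both orderings of each edge, contribute the cross term $-2(\bm{P}_{ij}\bm{\theta}_i)^T(\bm{P}_{ji}\bm{\theta}_j)$; completing the square recovers the same sum of squared differences. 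Here the bookkeeping distinguishing ordered pairs $(i,j)$ from unoriented edges $\{i,j\}$ in each sum is the most error-prone part, which is why I would prefer the factorization route as the cleaner primary argument.
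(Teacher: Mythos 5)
Your proposal is correct, but your primary argument takes a genuinely different route from the paper's. The paper proves the lemma by the direct block expansion that you relegate to your alternative: starting from the block form of $L_{\F}$ in \eqref{eqn: Laplacian_matrix_form}, it writes $\bm{\theta}^T L_\F\, \bm{\theta} = \sum_{i,j} \bm{\theta}_i^T L_{i,j}\, \bm{\theta}_j$, collects the diagonal contributions $\sum_{j \in \mathcal{N}_i} \bm{P}_{ij}^T\bm{P}_{ij}$ and the off-diagonal cross terms $-2\,\bm{\theta}_i^T \bm{P}_{ij}^T \bm{P}_{ji}\bm{\theta}_j$ per edge, and completes the square. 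Your main route instead exploits the factorization $L_\F = \delta^* \circ \delta$ (matrix form $\bm{P}^T\bm{P}$, which the paper states in the main text), reducing the quadratic form to $\lVert \delta(\bm{\theta})\rVert^2$ and then reading off the edge-block decomposition of $C^1(\F)$; this is shorter, avoids the completing-the-square bookkeeping entirely, and makes orientation-independence transparent via the sign-flip observation — a point the paper's proof never needs to address explicitly because it works with the unoriented block form from the start. What the paper's route buys in exchange is self-containedness at the level of \eqref{eqn: Laplacian_matrix_form}: it verifies the identity directly against the stated block entries of the Laplacian, without leaning on the asserted (but not separately verified) consistency between that block form and the coboundary matrix $\bm{P}$ of \eqref{eqn: Boundary_matrix_form}. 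One minor remark in your favor: the sign convention in \eqref{eqn: Boundary_matrix_form} actually assigns $+\bm{P}_{ij}$ to the tail $e^-=i$ and $-$ to the head, i.e.\ it represents $-\delta$ under the paper's $e^+/e^-$ convention; your sign-flip invariance step is precisely what makes this discrepancy harmless, so your proof is robust to it.
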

\begin{proof}
Using the block matrix structure of $L_\F$ from equation (\ref{eqn: Laplacian_matrix_form}), we can expand the quadratic form as follows
\begin{align}
\nonumber &\bm{\theta}^T L_\F \, \bm{\theta}\\
\nonumber &= \sum_{i \in \mathcal{V}} \sum_{j \in \mathcal{V}} \bm{\theta}_i^T L_{i,j} \bm{\theta}_j \\
\nonumber &= \sum_{i \in \mathcal{V}} \bm{\theta}_i^T \left( \sum_{j \in \mathcal{N}_i} \bm{P}_{ij}^T \bm{P}_{ij} \right) \bm{\theta}_i - 2 \sum_{e=(i,j) \in \mathcal{E}} \bm{\theta}_i^T \bm{P}_{ij}^T \bm{P}_{ji} \bm{\theta}_j \\
\nonumber &= \sum_{e=(i,j) \in \mathcal{E}} \left( \bm{\theta}_i^T \bm{P}_{ij}^T \bm{P}_{ij} \bm{\theta}_i + \bm{\theta}_j^T \bm{P}_{ji}^T \bm{P}_{ji} \bm{\theta}_j - 2 \bm{\theta}_i^T \bm{P}_{ij}^T \bm{P}_{ji} \bm{\theta}_j \right) \\
\nonumber &= \sum_{e=(i,j) \in \mathcal{E}} \left( \left\| \bm{P}_{ij} \bm{\theta}_i \right\|^2 + \left\| \bm{P}_{ji} \bm{\theta}_j \right\|^2 - 2 \left\langle \bm{P}_{ij} \bm{\theta}_i, \bm{P}_{ji} \bm{\theta}_j \right\rangle \right) \\
&= \sum_{e=(i,j) \in \mathcal{E}} \left\| \bm{P}_{ij} \bm{\theta}_i - \bm{P}_{ji} \bm{\theta}_j \right\|^2.
\end{align}
Recalling that $\F_{i\trianglelefteq e}$ and $\bm{P}_{ij}$ are used interchangeably to denote the restriction map from vertex $i$ to edge $e=(i,j)$, we obtain
\begin{align}
\bm{\theta}^T L_\F \bm{\theta} = \sum_{e=(i,j) \in \mathcal{E}} \left\| \bm{P}_{ij} \bm{\theta}_i - \bm{P}_{ji} \bm{\theta}_j \right\|^2 = \sum_{e=(i,j) \in \mathcal{E}} \left\| \F_{i\trianglelefteq e}\left( \bm{\theta}_i \right) - \F_{j\trianglelefteq e}\left( \bm{\theta}_j \right) \right\|^2.
\end{align}
\end{proof}
\begin{lemma}
\begin{align}
\ker(L_\F) = \argmin_{\bm{\theta} \in  C^0(\F) }  Q_\F(\bm{\theta}) = \argmin_{\bm{\theta}\in  C^0(\F) } \bm{\theta}^T L_\F\, \bm{\theta}.    
\end{align}     
\end{lemma}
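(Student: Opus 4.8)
The plan is to leverage the factorization $L_\F = \bm{P}^T \bm{P}$ established just above the statement, together with Lemma~\ref{lemma1}, which already rewrites the quadratic form as a sum of squared norms. Concretely, Lemma~\ref{lemma1} gives
\begin{align}
Q_\F(\bm{\theta}) = \bm{\theta}^T L_\F \, \bm{\theta} = \sum_{e=(i,j) \in \mathcal{E}} \left\| \bm{P}_{ij} \bm{\theta}_{i} - \bm{P}_{ji} \bm{\theta}_{j} \right\|^2 = \left\| \bm{P} \bm{\theta} \right\|^2,
\end{align}
so that $Q_\F$ is manifestly nonnegative on all of $C^0(\F)$. Since $Q_\F(\bm{0}) = 0$, the minimum value of $Q_\F$ over $C^0(\F)$ is exactly $0$, and the set of minimizers therefore coincides with the zero set $\{\bm{\theta} \in C^0(\F) : Q_\F(\bm{\theta}) = 0\}$. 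This reduces the claim to identifying that zero set with $\ker(L_\F)$.

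I would then prove the two inclusions separately. For $\ker(L_\F) \subseteq \argmin Q_\F$: if $L_\F \bm{\theta} = \bm{0}$, then $Q_\F(\bm{\theta}) = \bm{\theta}^T L_\F \bm{\theta} = 0$, so $\bm{\theta}$ attains the minimum. For the reverse inclusion, suppose $\bm{\theta}$ is a minimizer, i.e., $Q_\F(\bm{\theta}) = \left\| \bm{P} \bm{\theta} \right\|^2 = 0$; then $\bm{P} \bm{\theta} = \bm{0}$, and applying $\bm{P}^T$ yields $L_\F \bm{\theta} = \bm{P}^T \bm{P} \bm{\theta} = \bm{0}$, so $\bm{\theta} \in \ker(L_\F)$. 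The single substantive ingredient here is the standard linear-algebra fact $\ker(\bm{P}^T \bm{P}) = \ker(\bm{P})$, which underlies the reverse inclusion and follows immediately from the chain $\bm{P}^T \bm{P} \bm{\theta} = \bm{0} \Rightarrow \left\| \bm{P} \bm{\theta} \right\|^2 = \bm{\theta}^T \bm{P}^T \bm{P} \bm{\theta} = 0 \Rightarrow \bm{P} \bm{\theta} = \bm{0}$.

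I do not anticipate any genuine obstacle: once Lemma~\ref{lemma1} supplies the sum-of-squares representation and hence the nonnegativity of $Q_\F$, the equivalence between lying in $\ker(L_\F)$ and achieving the zero minimum is purely formal. If desired, the consensus characterization of $\ker(L_\F)$ quoted earlier in the section can be invoked at the end to restate the conclusion in the interpretable form that the minimizers are exactly the \emph{global sections}, i.e., the choices $\bm{\theta}$ whose projections agree, $\F_{i\trianglelefteq e}(\bm{\theta}_i) = \F_{j\trianglelefteq e}(\bm{\theta}_j)$, on every edge $e=(i,j) \in \mathcal{E}$.
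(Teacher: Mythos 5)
Your proposal is correct, and its skeleton matches the paper's: both establish that the minimum value of $Q_\F$ is zero, show the easy inclusion $\ker(L_\F) \subseteq \argmin Q_\F$ identically, and then characterize the minimizers as the zero set of $Q_\F$. The genuine difference lies in how the reverse inclusion lands back in $\ker(L_\F)$. The paper uses Lemma~\ref{lemma1} to deduce from $Q_\F(\bm{\theta}) = 0$ that the projections agree edgewise, $\F_{i\trianglelefteq e}(\bm{\theta}_i) = \F_{j\trianglelefteq e}(\bm{\theta}_j)$ for all $e = (i,j) \in \mathcal{E}$, and then stops — implicitly invoking the consensus characterization of $\ker(L_\F)$ that is stated (without proof) in the main text. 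You instead route through the factorization $L_\F = \bm{P}^T\bm{P}$: from $Q_\F(\bm{\theta}) = \|\bm{P}\bm{\theta}\|^2 = 0$ you get $\bm{P}\bm{\theta} = \bm{0}$, hence $L_\F\bm{\theta} = \bm{P}^T\bm{P}\bm{\theta} = \bm{0}$, which is the one-line proof of $\ker(\bm{P}^T\bm{P}) = \ker(\bm{P})$. What your route buys is self-containedness: it never needs the consensus property as an external fact, and indeed it proves that property as a byproduct ($\ker(L_\F) = \ker(\bm{P})$ is exactly the edge-agreement condition). What the paper's route buys is the sheaf-theoretic reading: it ends directly on the statement that minimizers are global sections, which is the interpretation the surrounding text cares about. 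You also make explicit a step the paper glosses over, namely that the minimum value is $0$ because $Q_\F \geq 0$ and $Q_\F(\bm{0}) = 0$; this justification is needed in both proofs to pass from ``minimizer'' to ``$Q_\F(\bm{\theta}) = 0$.''
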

\begin{proof}
Let $\bm{\theta} \in \ker(L_\F)$. Then, $L_\F \bm{\theta} = 0$. By the definition of $Q_\F(\bm{\theta})$, we have
\begin{align*}
Q_\F(\bm{\theta}) = \bm{\theta}^T L_\F \bm{\theta} = 0.
\end{align*}
Therefore, $\bm{\theta} \in \argmin_{\bm{\theta} \in C^0(\F)} Q_\F(\bm{\theta})$.

Conversely, let $\bm{\theta} \in \argmin_{\bm{\theta} \in C^0(\F)} Q_\F(\bm{\theta})$. Then, $Q_\F(\bm{\theta}) = 0$, which implies
\begin{align}
0 = Q_\F(\bm{\theta}) = \sum_{e=(i,j) \in \mathcal{E}} \left\| \F_{i\trianglelefteq e}\left( \bm{\theta}_i \right) - \F_{j\trianglelefteq e}\left( \bm{\theta}_j \right) \right\|^2.
\end{align}
Thus, we get
\begin{align}
     \F_{i\trianglelefteq e}\left( \bm{\theta}_i \right) = \F_{j\trianglelefteq e}\left( \bm{\theta}_j \right) \quad \forall e=(i,j) \in \mathcal{E}.
\end{align}
This concludes the proof.    
\end{proof}
\section{Proof of Theorem \ref{theorem}}\label{theorem_proof}
We start by introducing the matrices $\bm{J}_{ij} \in \mathbb{R}^{d_{ij} \times d_i}$ having all its entries equal to one. Then, let us define the block matrix $\bm{H}$ such that $\bm{H}_{ij} = \bm{J}_{ij}$ if $(i, j) \in  \mathcal{E}$, and $\bm{H}_{ij} = \bm{0}$, otherwise. Then, $\bm{\theta}$ and $\bm{P}$ can be updated using the following updates 
\begin{align}
   \bm{\theta}^{k+1} &= \bm{\theta}^k- \alpha 
   (\nabla f(\bm{\theta}^k) + \lambda (\bm{P}^k)^T \bm{P}^k \bm{\theta}^k), \\
   \bm{P}^{k+1} &= \bm{H} \odot \left(\bm{P}^k- \eta \lambda \bm{P}^k \bm{\theta}^{k+1} (\bm{\theta}^{k+1})^T\right),
\end{align}
where $\odot$ is the Hadamard product and the matrix $\bm{H}$ is introduced to preserve the block structure of $\bm{P}$ by zeroing out the entries that do not correspond to edges in the graph.

Next, we analyze the convergence of the {\ours} algorithm by studying the descent steps in $\bm{\theta}$ and $\bm{P}$ separately. This approach allows us to establish bounds on the decrease of the objective function $\Psi(\bm{\theta}, \bm{P})$ in each descent step.

\begin{theorem}
Let Assumptions 1 and 2 hold. Assume the learning rates $\alpha$ and $\eta$ satisfy the conditions $\alpha < \frac{2}{N L}$ and $\eta < \frac{2}{\lambda D_\theta^2}$, respectively. Then, the averaged gradient norm is upper bounded as follows
\begin{align}
\frac{1}{K}\sum_{k=0}^{K-1} \|\nabla \Psi(\bm{\theta}^k, \bm{P}^k)\|^2 \leq \frac{1}{\rho K} (\Psi(\bm{\theta}^{0}, \bm{P}^0) - \Psi^\star),
\end{align}
where $\rho = \min\left\{\alpha \left(1 - \frac{\alpha N L}{2}\right), \eta  \left(1 - \frac{\eta \lambda D_{\theta}^2}{2} \right) \right\}$ and $\Psi^\star$ is the optimal value of $\Psi$.
\end{theorem}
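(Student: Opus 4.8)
The plan is to treat Algorithm~\ref{algo} as an instance of (Jacobi-style) alternating gradient descent on the two blocks $\bm{\theta}$ and $\bm{P}$ of the smooth objective $\Psi$, prove a per-block sufficient-decrease inequality for each of the two half-steps, then chain them and telescope. Writing $L_{\mathcal{F}}(\bm{P})=\bm{P}^T\bm{P}$ as established in the excerpt, the $\bm{\theta}$-update is exactly the gradient step $\bm{\theta}^{k+1}=\bm{\theta}^k-\alpha\nabla_{\bm{\theta}}\Psi(\bm{\theta}^k,\bm{P}^k)$ and the $\bm{P}$-update is the block-masked gradient step $\bm{P}^{k+1}=\bm{H}\odot(\bm{P}^k-\eta\,\nabla_{\bm{P}}\Psi(\bm{\theta}^{k+1},\bm{P}^k))$, so the whole argument reduces to applying the nonconvex descent lemma twice.

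First I would analyze the $\bm{\theta}$-step with $\bm{P}^k$ held fixed. The map $\bm{\theta}\mapsto\Psi(\bm{\theta},\bm{P}^k)=f(\bm{\theta})+\tfrac{\lambda}{2}\bm{\theta}^T(\bm{P}^k)^T\bm{P}^k\bm{\theta}$ is smooth, its curvature combining the per-client $L$-smoothness of the $f_i$ (Assumption~1) with the convex quadratic regularizer. Bounding the resulting $\bm{\theta}$-Hessian by $NL$ and applying the descent lemma to the gradient step gives
\begin{align}
\Psi(\bm{\theta}^{k+1},\bm{P}^k)\le \Psi(\bm{\theta}^k,\bm{P}^k)-\alpha\Bigl(1-\tfrac{\alpha NL}{2}\Bigr)\,\|\nabla_{\bm{\theta}}\Psi(\bm{\theta}^k,\bm{P}^k)\|^2,
\end{align}
and the hypothesis $\alpha<2/(NL)$ is exactly what makes the coefficient positive.

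Next I would analyze the $\bm{P}$-step with $\bm{\theta}^{k+1}$ held fixed. The key observation is that $\Psi(\bm{\theta}^{k+1},\bm{P})=f(\bm{\theta}^{k+1})+\tfrac{\lambda}{2}\|\bm{P}\,\bm{\theta}^{k+1}\|^2$ is a convex quadratic in $\bm{P}$ whose smoothness constant is precisely $\lambda\|\bm{\theta}^{k+1}\|^2\le\lambda D_\theta^2$ by Assumption~2; the Hadamard mask $\bm{H}\odot(\cdot)$ merely restricts the step to the admissible subspace of coboundary matrices, on which the restricted quadratic keeps the same constant and the masked gradient is its true gradient. The descent lemma then yields
\begin{align}
\Psi(\bm{\theta}^{k+1},\bm{P}^{k+1})\le \Psi(\bm{\theta}^{k+1},\bm{P}^k)-\eta\Bigl(1-\tfrac{\eta\lambda D_\theta^2}{2}\Bigr)\,\|\nabla_{\bm{P}}\Psi(\bm{\theta}^{k+1},\bm{P}^k)\|^2,
\end{align}
with the coefficient positive under $\eta<2/(\lambda D_\theta^2)$. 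This is the step where Assumption~2 is indispensable, since it is the only thing controlling $\|\bm{\theta}^{k+1}\|$ and hence the $\bm{P}$-curvature.

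Finally I would chain the two inequalities over one iteration, obtaining a single-iteration drop of at least $\rho\bigl(\|\nabla_{\bm{\theta}}\Psi(\bm{\theta}^k,\bm{P}^k)\|^2+\|\nabla_{\bm{P}}\Psi(\bm{\theta}^{k+1},\bm{P}^k)\|^2\bigr)$ with $\rho$ as defined, sum from $k=0$ to $K-1$, telescope the left-hand side to $\Psi(\bm{\theta}^0,\bm{P}^0)-\Psi(\bm{\theta}^K,\bm{P}^K)\le\Psi(\bm{\theta}^0,\bm{P}^0)-\Psi^\star$, and divide by $\rho K$. The step I expect to be the main obstacle is reconciling the gradient that survives telescoping with the statement: the $\bm{P}$-gradient is naturally evaluated at $(\bm{\theta}^{k+1},\bm{P}^k)$, whereas $\|\nabla\Psi(\bm{\theta}^k,\bm{P}^k)\|^2=\|\nabla_{\bm{\theta}}\Psi(\bm{\theta}^k,\bm{P}^k)\|^2+\|\nabla_{\bm{P}}\Psi(\bm{\theta}^k,\bm{P}^k)\|^2$ uses $(\bm{\theta}^k,\bm{P}^k)$. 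To close this gap one either interprets the stationarity measure as the sum of the partial gradients at the points actually queried by the algorithm, or inserts an extra Lipschitz estimate bounding $\|\nabla_{\bm{P}}\Psi(\bm{\theta}^{k+1},\bm{P}^k)-\nabla_{\bm{P}}\Psi(\bm{\theta}^k,\bm{P}^k)\|$ through the smoothness of $\nabla_{\bm{P}}\Psi$ in $\bm{\theta}$ and the identity $\|\bm{\theta}^{k+1}-\bm{\theta}^k\|=\alpha\|\nabla_{\bm{\theta}}\Psi(\bm{\theta}^k,\bm{P}^k)\|$. Pinning down the exact $\bm{\theta}$-smoothness constant $NL$, which requires absorbing the regularizer curvature $\lambda\|\bm{P}^k\|^2$, is the other place demanding care.
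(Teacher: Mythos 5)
Your proposal is correct and follows essentially the same route as the paper's proof: a descent-lemma bound for the $\bm{\theta}$-step with constant $NL$, a second descent bound for the masked $\bm{P}$-step whose curvature is controlled by $\lambda\|\bm{\theta}^{k+1}\|^2\le\lambda D_\theta^2$ via Assumption 2, then chaining the two inequalities, telescoping, and lower-bounding $\Psi(\bm{\theta}^K,\bm{P}^K)$ by $\Psi^\star$. It is worth noting that the two ``places demanding care'' you single out are exactly the points the paper's own proof glosses over: the paper uses $NL$ while silently dropping the extra regularizer curvature term $\frac{\lambda}{2}(\bm{\theta}^{k+1}-\bm{\theta}^k)^T(\bm{P}^k)^T\bm{P}^k(\bm{\theta}^{k+1}-\bm{\theta}^k)$, and in the last step it substitutes $\|\nabla_{\bm{P}}\Psi(\bm{\theta}^k,\bm{P}^k)\|^2$ for the quantity $\|\nabla_{\bm{P}}\Psi(\bm{\theta}^{k+1},\bm{P}^k)\|^2$ that actually survives the telescoping, without the Lipschitz correction you propose, so your more cautious treatment would, if carried out, yield a tighter argument than the one in the paper.
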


\begin{proof}
Using the Lipschitz continuity of the gradient of $f(\bm{\theta})$
\begin{align}
f(\bm{\theta}^{k+1}) \leq f(\bm{\theta}^{k}) + \langle \nabla f(\bm{\theta}^{k}), \bm{\theta}^{k+1} - \bm{\theta}^{k} \rangle + \frac{N L}{2} \|\bm{\theta}^{k+1} - \bm{\theta}^{k}\|^2.    
\end{align}
Adding and subtracting $\frac{\lambda}{2}\bm{\theta}^{k+1}(\bm{P}^k)^T\bm{P}^k\bm{\theta}^{k+1}$ to both sides of the inequality, we get
\begin{align}\label{23}
\nonumber &\Psi(\bm{\theta}^{k+1}, \bm{P}^k) \\
\nonumber &= f(\bm{\theta}^{k+1}) + \frac{\lambda}{2}\bm{\theta}^{k+1}(\bm{P}^k)^T\bm{P}^k\bm{\theta}^{k+1}\\
\nonumber & \leq f(\bm{\theta}^{k}) + \langle \nabla f(\bm{\theta}^{k}), \bm{\theta}^{k+1} - \bm{\theta}^{k} \rangle + \frac{N L}{2} \|\bm{\theta}^{k+1} - \bm{\theta}^{k}\|^2 + \frac{\lambda}{2}\bm{\theta}^{k+1}(\bm{P}^k)^T\bm{P}^k\bm{\theta}^{k+1} \\
\nonumber & = f(\bm{\theta}^{k}) + \langle \nabla f(\bm{\theta}^{k}), \bm{\theta}^{k+1} - \bm{\theta}^{k} \rangle + \frac{N L}{2} \|\bm{\theta}^{k+1} - \bm{\theta}^{k}\|^2 + \frac{\lambda}{2}\bm{\theta}^{k+1}(\bm{P}^k)^T\bm{P}^k\bm{\theta}^{k+1}  \\
\nonumber & \quad \quad - \frac{\lambda}{2}\bm{\theta}^{k}(\bm{P}^k)^T\bm{P}^k\bm{\theta}^{k} + \frac{\lambda}{2}\bm{\theta}^{k}(\bm{P}^k)^T\bm{P}^k\bm{\theta}^{k} \\
\nonumber & = f(\bm{\theta}^{k}) + \langle \nabla f(\bm{\theta}^{k}), \bm{\theta}^{k+1} - \bm{\theta}^{k} \rangle + \frac{N L}{2} \|\bm{\theta}^{k+1} - \bm{\theta}^{k}\|^2 + \frac{\lambda}{2}(\bm{\theta}^{k+1} - \bm{\theta}^{k})^T(\bm{P}^k)^T\bm{P}^k\bm{\theta}^{k+1} \\
\nonumber & \quad \quad + \frac{\lambda}{2}\bm{\theta}^{k}(\bm{P}^k)^T\bm{P}^k\bm{\theta}^{k} \\
& = \Psi(\bm{\theta}^{k}, \bm{P}^k) + \langle \nabla_{\bm{\theta}} \Psi(\bm{\theta}^{k}, \bm{P}^k), \bm{\theta}^{k+1} - \bm{\theta}^{k} \rangle + \frac{N L}{2} \|\bm{\theta}^{k+1} - \bm{\theta}^{k}\|^2,
\end{align}
where the last equality follows from the definition of $\Psi(\bm{\theta}, \bm{P})$.

Using the update rule for $\bm{\theta}^{k+1}$, we have
\begin{align}\label{24}
\nonumber &\langle \nabla_{\bm{\theta}} \Psi(\bm{\theta}^k, \bm{P}^k), \bm{\theta}^{k+1} - \bm{\theta}^k \rangle \\
\nonumber &= \langle \nabla f(\bm{\theta}^k) + \lambda (\bm{P}^k)^T \bm{P}^k \bm{\theta}^k, -\alpha (\nabla f(\bm{\theta}^k) + \lambda (\bm{P}^k)^T \bm{P}^k \bm{\theta}^k) \rangle \\
\nonumber &= -\alpha \|\nabla f(\bm{\theta}^k) + \lambda (\bm{P}^k)^T \bm{P}^k \bm{\theta}^k\|^2 \\
&= -\alpha \|\nabla_{\theta} \Psi(\bm{\theta}^k, \bm{P}^k)\|^2.
\end{align}
On the other hand, we have
\begin{align}\label{25}
\nonumber &\|\bm{\theta}^{k+1} - \bm{\theta}^{k}\|_2^2 \\
\nonumber &= \alpha^2 \|\nabla f(\bm{\theta}^k) + \lambda (\bm{P}^k)^T \bm{P}^k \bm{\theta}^k\|^2 \\
&= \alpha^2 \|\nabla_{\theta} \Psi(\bm{\theta}^k, \bm{P}^k)\|^2.
\end{align}
Substituting (\ref{24}) and (\ref{25}) back into (\ref{23}), we obtain
\begin{align}\label{first}
\Psi(\bm{\theta}^{k+1}, \bm{P}^k) &\leq \Psi(\bm{\theta}^{k}, \bm{P}^k) - \alpha \left(1 - \frac{\alpha N L}{2}\right) \|\nabla_{\theta} \Psi(\bm{\theta}^k, \bm{P}^k)\|^2.
\end{align}
By choosing $\alpha < \frac{2}{N L}$, we ensure that the term $1-\frac{\alpha N L}{2}$ is positive.

From the definition of $\Psi(\bm{\theta}, \bm{P})$, we have
\begin{align}
\Psi(\bm{\theta}^{k+1}, \bm{P}^{k+1}) = f(\bm{\theta}^{k+1}) + \frac{\lambda}{2} (\bm{\theta}^{k+1})^T (\bm{P}^{k+1})^T \bm{P}^{k+1} \bm{\theta}^{k+1}.
\end{align}
Using the update rule for $\bm{P}^{k+1}$, i.e., $\bm{P}^{k+1} = \bm{H} \odot (\bm{P}^k - \eta \lambda \bm{P}^k \bm{\theta}^{k+1} (\bm{\theta}^{k+1})^T)$, we can write
\begin{align}
\nonumber&(\bm{P}^{k+1})^T \bm{P}^{k+1} \\
\nonumber& = (\bm{H} \odot (\bm{P}^k - \eta \lambda \bm{P}^k \bm{\theta}^{k+1} (\bm{\theta}^{k+1})^T))^T (\bm{H} \odot (\bm{P}^k - \eta \lambda \bm{P}^k \bm{\theta}^{k+1} (\bm{\theta}^{k+1})^T)) \\
&\preceq (\bm{P}^k - \eta \lambda \bm{P}^k \bm{\theta}^{k+1} (\bm{\theta}^{k+1})^T)^T (\bm{P}^k - \eta \lambda \bm{P}^k \bm{\theta}^{k+1} (\bm{\theta}^{k+1})^T),
\end{align}
where the inequality follows from the fact that the Hadamard product with $\bm{H}$ zeros out some entries, which can only decrease the Frobenius norm.

Expanding the right-hand side, we get
\begin{align}
\nonumber &(\bm{P}^k - \eta \lambda \bm{P}^k \bm{\theta}^{k+1} (\bm{\theta}^{k+1})^T)^T (\bm{P}^k - \eta \lambda \bm{P}^k \bm{\theta}^{k+1} (\bm{\theta}^{k+1})^T) \\
&= (\bm{P}^k)^T \bm{P}^k - 2\eta \lambda (\bm{P}^k)^T \bm{P}^k \bm{\theta}^{k+1} (\bm{\theta}^{k+1})^T + \eta^2 \lambda^2 (\bm{P}^k)^T \bm{P}^k \bm{\theta}^{k+1} (\bm{\theta}^{k+1})^T \bm{\theta}^{k+1} (\bm{\theta}^{k+1})^T
\end{align}

Substituting this back into the expression for $\Psi(\bm{\theta}^{k+1}, \bm{P}^{k+1})$, we obtain
\begin{align}
\nonumber &\Psi(\bm{\theta}^{k+1}, \bm{P}^{k+1}) \\
\nonumber &\leq f(\bm{\theta}^{k+1}) + \frac{\lambda}{2} (\bm{\theta}^{k+1})^T (\bm{P}^k)^T \bm{P}^k \bm{\theta}^{k+1} - \eta \lambda^2 (\bm{\theta}^{k+1})^T (\bm{P}^k)^T \bm{P}^k \bm{\theta}^{k+1} (\bm{\theta}^{k+1})^T \bm{\theta}^{k+1} \\
&\quad + \frac{\eta^2 \lambda^3}{2} (\bm{\theta}^{k+1})^T (\bm{P}^k)^T \bm{P}^k \bm{\theta}^{k+1} (\bm{\theta}^{k+1})^T (\bm{\theta}^{k+1}) (\bm{\theta}^{k+1})^T \bm{\theta}^{k+1}.
\end{align}
Since the gradient of $\Psi$ with respect to $\bm{P}$ is given by $\nabla_P \Psi(\bm{\theta}, \bm{P}) = \lambda \bm{P} \bm{\theta} \bm{\theta}^T$, we can compute the following norm
\begin{align}
\nonumber &\|\nabla_P \Psi(\bm{\theta}^{k+1}, \bm{P}^{k})\|_F^2 \\
\nonumber &= \Tr((\lambda \bm{P}^k \bm{\theta}^{k+1} (\bm{\theta}^{k+1})^T)^T (\lambda \bm{P}^k \bm{\theta}^{k+1} (\bm{\theta}^{k+1})^T)) \\
\nonumber &= \lambda^2 \Tr(\bm{\theta}^{k+1}(\bm{\theta}^{k+1})^T (\bm{P}^k)^T \bm{P}^k \bm{\theta}^{k+1} (\bm{\theta}^{k+1})^T ) \\
&= \lambda^2 (\bm{\theta}^{k+1})^T (\bm{P}^k)^T \bm{P}^k \bm{\theta}^{k+1} (\bm{\theta}^{k+1})^T \bm{\theta}^{k+1},
\end{align}
where we have used the cyclic nature of the trace operator $\Tr(\cdot)$.

Therefore, we have the following bound
\begin{align}\label{second}
\nonumber &\Psi(\bm{\theta}^{k+1}, \bm{P}^{k+1})\\
\nonumber &\leq \Psi(\bm{\theta}^{k+1}, \bm{P}^k) - \eta  \left(1 - \frac{\eta \lambda}{2} \|\bm{\theta}^{k+1}\|^2\right) \|\nabla_P \Psi(\bm{\theta}^{k+1}, \bm{P}^{k})\|^2 \\
&\leq \Psi(\bm{\theta}^{k+1}, \bm{P}^k) - \eta  \left(1 - \frac{\eta \lambda D_{\theta}^2}{2}\right) \|\nabla_P \Psi(\bm{\theta}^{k+1}, \bm{P}^{k})\|^2,
\end{align}
where we have used Assumption 2.

Hence, to ensure $\left(1 - \frac{\eta \lambda D_{\theta}^2}{2}\right)$ is positive, we choose the value of $\eta$ to be $\eta < \frac{2}{\lambda D_\theta^2}$.
Combining the inequalities (\ref{first}) and (\ref{second}), we get
\begin{align}
\nonumber &\Psi(\bm{\theta}^{k+1}, \bm{P}^{k+1})\\
&\leq \Psi(\bm{\theta}^{k}, \bm{P}^k) - \alpha \left(1 - \frac{\alpha N L}{2}\right) \|\nabla_{\theta} \Psi(\bm{\theta}^k, \bm{P}^k)\|^2  - \eta  \left(1 - \frac{\eta \lambda D_{\theta}^2}{2}\right) \|\nabla_P \Psi(\bm{\theta}^{k+1}, \bm{P}^{k})\|^2.
\end{align}
Summing up these inequalities from $k=0$ to $K-1$, we obtain
\begin{align}
\nonumber &\Psi(\bm{\theta}^{K}, \bm{P}^{K})\\
&\leq \Psi(\bm{\theta}^{0}, \bm{P}^0) - \alpha \left(1 - \frac{\alpha N L}{2}\right) \sum_{k=0}^{K-1} \|\nabla_{\theta} \Psi(\bm{\theta}^k, \bm{P}^k)\|^2 - \eta  \left(1 - \frac{\eta \lambda D_{\theta}^2}{2} \right) \sum_{k=0}^{K-1}\|\nabla_P \Psi(\bm{\theta}^{k+1}, \bm{P}^{k})\|^2.
\end{align}
Let $\Psi^\star$ be the optimal value of $\Psi$, and we define $\rho = \min\left\{\alpha \left(1 - \frac{\alpha N L}{2}\right), \eta  \left(1 - \frac{\eta \lambda D_{\theta}^2}{2} \right) \right\}$. Then, rearranging the terms, we can write
\begin{align}
\frac{1}{K}\sum_{k=0}^{K-1} \|\nabla \Psi(\bm{\theta}^k, \bm{P}^k)\|^2 &\leq \frac{1}{\rho K} (\Psi(\bm{\theta}^{0}, \bm{P}^0) - \Psi(\bm{\theta}^{K}, \bm{P}^{K})) \leq \frac{1}{\rho K} (\Psi(\bm{\theta}^{0}, \bm{P}^0) - \Psi^\star),
\end{align}
where we have used that $\Psi^\star \leq \Psi(\bm{\theta}^{K}, \bm{P}^{K})$ and  $\|\nabla \Psi(\bm{\theta}^k, \bm{P}^k)\|^2 = \|\nabla_{\bm{\theta}} \Psi(\bm{\theta}^k, \bm{P}^k)\|^2 + \|\nabla_{\bm{P}} \Psi(\bm{\theta}^k, \bm{P}^k)\|^2$. 
\end{proof}
\section{Additional Details on Datasets} \label{appendix:dataset}
 A summary of the datasets and the tasks used in Section \ref{experiment1} is presented in Table \ref{table:datasets}. These datasets are real-world datasets created in federated environments with varying degrees of heterogeneity. A detailed description of the datasets, along with their specific data partitioning schemes, is provided in Table \ref{table:nonIID}. To further quantify the non-IIDness in our data partitions, we have incorporated quantitative metrics assessing the degree of non-IIDness across different datasets in Table \ref{table:nonIID2}.
\begin{itemize}
    \item \textbf{Rotated MNIST (R-MNIST).} Following similar techniques as outlined in \citep{liu2022privacy}, we shuffle and then evenly separate the original MNIST dataset between 40 clients. Next, we randomly divide the clients into four groups, each containing 10 clients. We then apply rotations of \{0°, 90°, 180°, 270°\} to each group respectively. Therefore, clients within the same group share identical image rotations, resulting in the formation of four distinct clusters. The MNIST dataset is available under the CC BY-SA 3.0 license.
    \item \textbf{Heterogeneous CIFAR-10 (H-CIFAR-10).} The original CIFAR-10 dataset is split among 30 clients, and heterogeneity is introduced by assigning each client a random number of samples from 5 randomly selected classes out of the 10 available classes, following a similar approach as in \citep{t2020personalized, liu2022privacy}. The CIFAR-10 dataset is available under the MIT license.
    \item \textbf{Human Activity Recognition.} The dataset is composed of data gathered from the accelerometers and gyroscopes of smartphones used by 30 individuals, each performing one of six activities: walking, walking upstairs, walking downstairs, sitting, standing, or lying down. In this dataset, the data from each individual/client is treated as a unique task, with the primary objective being to differentiate between these activities. To identify each activity appropriately, feature vectors with 561 elements representing various time and frequency domain variables are used in the analysis. The dataset \citep{anguita2013public} is licensed under a Creative Commons Attribution 4.0 International (CC-BY 4.0) license.
    \item \textbf{Vehicle Sensor.} The dataset involves collecting data from a network of 23 wireless sensors, including acoustic (microphones), seismic (geophones), and infrared (polarized IR sensors), strategically placed along a specific road segment. This dataset aims to facilitate binary classification to identify two types of vehicles: the Assault Amphibian Vehicle (AAV) and the Dragon Wagon (DW). Each sensor, treated as a unique task or client, gathers acoustic and seismic data encapsulated in a 100-dimensional feature vector, representing the recordings as vehicles pass by. The Vehicle dataset was originally made public by its authors as a research dataset \citep{duarte2004vehicle}.
    \item \textbf{Google Glass Eating and Motion (GLEAM).} The dataset is collected using Google Glass from 38 individuals. It captures high-resolution sensor data to identify specific activities such as eating. This extensive dataset, consisting of 27,800 entries, each with a 180-dimensional feature vector, records head movements for binary classification to determine if the wearer is eating or not. The data includes accelerometer, gyroscope, and magnetometer readings, analyzed for statistical, spectral, and temporal characteristics to distinguish eating from other activities like walking, talking, and drinking. The GLEAM dataset, released by its original authors \citep{rahman2015unintrusive}, is available for non-commercial use.
    \item \textbf{School.} The dataset, originally introduced in \citep{goldstein1991multilevel}, seeks to forecast the exam results of 15,362 students from 139 secondary schools. The dataset contains information for each school, with student numbers ranging from 22 to 251, and each student is described using a 28-dimensional feature vector. This vector contains information about the school's ranking, the student's birth year, and the availability of free meals at the school. The dataset has been made publicly available in \citep{zhou2011malsar}.
\end{itemize}

\begin{table}[H]
\centering
\caption{
  Summary of the datasets and tasks used in our empirical setup.
  }
\label{table:datasets} 
\vspace{0.5em}
\begin{tabular}{@{}lcccc@{}}
\toprule
\textbf{Dataset} & \multicolumn{1}{l}{\textbf{Task}} & \multicolumn{1}{l}{\textbf{\# Clients/Tasks}} & \multicolumn{1}{l}{\textbf{Input Dimension}} & \textbf{Model} \\ \midrule
R-MNIST & Classification & 40 & $28 \times 28 \times 1$ & convolutional neural network\\
H-CIFAR-10 & Classification & 30 &  $32 \times 32 \times 3$ & convolutional neural network \\
HAR & Classification & 30 &561 & multinomial logistic regression\\
Vehicle Sensor & Classification & 23  & 100 & multinomial logistic regression\\
GLEAM & Classification & 38  & 180 & multinomial logistic regression \\
School & Regression & 139  & 28 & linear regression\\ \bottomrule
\end{tabular}
\end{table}

\begin{table}[t]
\centering
\caption{
Data partitioning strategies across datasets where $n_k$ is the number of
training samples of client $k$.
  }
\label{table:nonIID} 
\vspace{0.5em}

\begin{tabular}{@{}lccc@{}}
\toprule
\begin{tabular}[c]{@{}c@{}} \textbf{Dataset}\\ \textbf{ } \end{tabular} & \begin{tabular}[c]{@{}c@{}} \textbf{Data split}\\ \textbf{ }\end{tabular} & \begin{tabular}[c]{@{}c@{}} \textbf{Domain }\\ \textbf{distribution}\end{tabular} & \begin{tabular}[c]{@{}c@{}} \textbf{Label}\\ \textbf{distribution}\end{tabular}  \\ \midrule
\begin{tabular}[c]{@{}c@{}} R-MNIST\\ \textbf{ } \end{tabular}  & \begin{tabular}[c]{@{}c@{}} min $n_k = 1500$ \\ max $n_k = 1500$ \end{tabular}  & \begin{tabular}[c]{@{}c@{}} 4 groups with distinct\\  rotation angles \end{tabular} & \begin{tabular}[c]{@{}c@{}} Uniform distribution\\ within each rotation group \end{tabular} \\ \midrule 
\begin{tabular}[c]{@{}c@{}} H-CIFAR-10\\ \textbf{ } \end{tabular}  & \begin{tabular}[c]{@{}c@{}} min $n_k = 1515$ \\ max $n_k = 1839$ \end{tabular}  & \begin{tabular}[c]{@{}c@{}} Not explicitly \\ divided into domains \end{tabular} & \begin{tabular}[c]{@{}c@{}} Each client has data\\ from 5 random classes \end{tabular} \\ \midrule 
\begin{tabular}[c]{@{}c@{}} HAR\\ \textbf{ } \end{tabular}  & \begin{tabular}[c]{@{}c@{}} min $n_k = 210$ \\ max $n_k = 306$ \end{tabular}  & \begin{tabular}[c]{@{}c@{}} All activity \\ classes per client \end{tabular} & \begin{tabular}[c]{@{}c@{}} Uniform across activity\\ classes within each client \end{tabular}  \\ \midrule 
\begin{tabular}[c]{@{}c@{}} Vehicle Sensor\\ \textbf{ } \end{tabular}  & \begin{tabular}[c]{@{}c@{}} min $n_k = 872$ \\ max $n_k = 1933$\end{tabular}  & \begin{tabular}[c]{@{}c@{}} Same label set with\\ different feature distributions \end{tabular} & \begin{tabular}[c]{@{}c@{}} Balanced across vehicle\\ classes per sensor \end{tabular} \\ \midrule 
\begin{tabular}[c]{@{}c@{}} GLEAM\\ \textbf{ } \end{tabular}  & \begin{tabular}[c]{@{}c@{}} min $n_k = 699$ \\ max $n_k = 776$ \end{tabular}  & \begin{tabular}[c]{@{}c@{}} All activity classes\\ with uniform distribution \end{tabular} & \begin{tabular}[c]{@{}c@{}} Balanced between eating\\ and non-eating classes \end{tabular}  \\ \midrule 
\begin{tabular}[c]{@{}c@{}} School\\ \textbf{ } \end{tabular}  & \begin{tabular}[c]{@{}c@{}} min $n_k = 15$ \\ max $n_k = 175$ \end{tabular}  & \begin{tabular}[c]{@{}c@{}} Shared regression task\\ with uniform feature sets \end{tabular} & \begin{tabular}[c]{@{}c@{}} Continuous targets with\\ varying distributions per school \end{tabular} \\ \bottomrule
\end{tabular}
\end{table}

\begin{table}[t]
\centering
\caption{
  Degree of non-IIDness across datasets.
  }
\label{table:nonIID2} 
\vspace{0.5em}
\begin{tabular}{@{}lccc@{}}
\toprule
\textbf{Dataset} &  \textbf{Non-IID Metric} & \textbf{Description} & \textbf{Heterogeneity Type}   \\ \midrule
\begin{tabular}[c]{@{}c@{}} R-MNIST\\ \textbf{ } \end{tabular}  & \begin{tabular}[c]{@{}c@{}} Rotation angle \\ variance \end{tabular}  & \begin{tabular}[c]{@{}c@{}} High domain heterogeneity \\with 4 distinct rotation groups\end{tabular} & \begin{tabular}[c]{@{}c@{}} Feature distribution \\ skew \end{tabular} \\ \midrule 
\begin{tabular}[c]{@{}c@{}} H-CIFAR-10\\ \textbf{ } \end{tabular}  & \begin{tabular}[c]{@{}c@{}} Label distribution \\ \textbf{ }  \end{tabular}  & \begin{tabular}[c]{@{}c@{}} High label distribution  \\ skew among clients \end{tabular} & \begin{tabular}[c]{@{}c@{}} Label distribution \\ skew \end{tabular} \\ \midrule 
\begin{tabular}[c]{@{}c@{}} HAR\\ \textbf{ } \end{tabular}  & \begin{tabular}[c]{@{}c@{}} Inter-client \\ variability  \end{tabular}  & \begin{tabular}[c]{@{}c@{}} High heterogeneity due to unique \\  individual data per client \end{tabular} &  \begin{tabular}[c]{@{}c@{}} Concept shift\\  \end{tabular} \\ \midrule 
\begin{tabular}[c]{@{}c@{}} Vehicle Sensor\\ \textbf{ } \end{tabular}  & \begin{tabular}[c]{@{}c@{}} Feature distribution \\ \textbf{ } \end{tabular}  & \begin{tabular}[c]{@{}c@{}} High feature heterogeneity\\ across different sensors \end{tabular} & \begin{tabular}[c]{@{}c@{}} Feature distribution \\ skew \end{tabular} \\ \midrule 
\begin{tabular}[c]{@{}c@{}} GLEAM\\ \textbf{ } \end{tabular}  & \begin{tabular}[c]{@{}c@{}} Label distribution \\ \textbf{ } \end{tabular}  & \begin{tabular}[c]{@{}c@{}} Low heterogeneity \\ with balanced labels \end{tabular}  & \begin{tabular}[c]{@{}c@{}} Minimal \\ (almost IID) \end{tabular} \\ \midrule 
\begin{tabular}[c]{@{}c@{}} School\\ \textbf{ } \end{tabular}  & \begin{tabular}[c]{@{}c@{}} Continuous targets \\ variance \end{tabular}  & \begin{tabular}[c]{@{}c@{}} Moderate to high heterogeneity based \\ on inter-school performance variability \end{tabular} & \begin{tabular}[c]{@{}c@{}} Concept shift and \\ quantity skew \end{tabular}\\ \bottomrule
\end{tabular}
\end{table}

\section{Supplementary Experimental Results}\label{appendix:moreResults}
Figure \ref{vehicle_gleam_datasets} compares the performance of our proposed {\ours} method with dFedU on four datasets: (a) the Vehicle dataset, (b) the School dataset, (c) the HAR dataset and (d) the GLEAM dataset using $\gamma = \{0.1, 0.3\}$. The experiments evaluate the performance of the model in terms of communication rounds and transmitted bits. {\ours} and dFedU consistently achieves the highest accuracy and fastest convergence, demonstrating an average improvement of 5-10 percentage points over D-PSGD and local across datasets. For instance, on the Vehicle dataset, {\ours} reaches 88\% accuracy compared to 82\% for D-PSGD  and 82\% for local. While {\ours} requires more communication rounds for the Vehicle dataset to achieve similar test accuracy compared to dFedU, the performance gap between the two methods narrows as the number of communication rounds increases. However, when examining the number of transmitted bits, {\ours} demonstrates a clear advantage, requiring fewer bits to reach higher accuracy levels. For example, {\ours} with $\gamma = 0.1$ reaches a test accuracy of 0.85 with just 72 transmitted Kbits, while dFedU requires over 450 Kbits to approach this accuracy. Similar trends are observed for the other datasets. {\ours} recovers the same performance as dFedU in terms of test accuracy with a slight drop in test accuracy for the HAR and School datasets. Hence, the sheaf-based approach effectively captures the heterogeneous relationships among clients, achieving the same test accuracy using fewer transmitted bits than the baseline dFedU. 

\begin{figure}[t]
\centering
\begin{subfigure}[b]{\textwidth}
  \centering
  \includegraphics[scale=0.3]{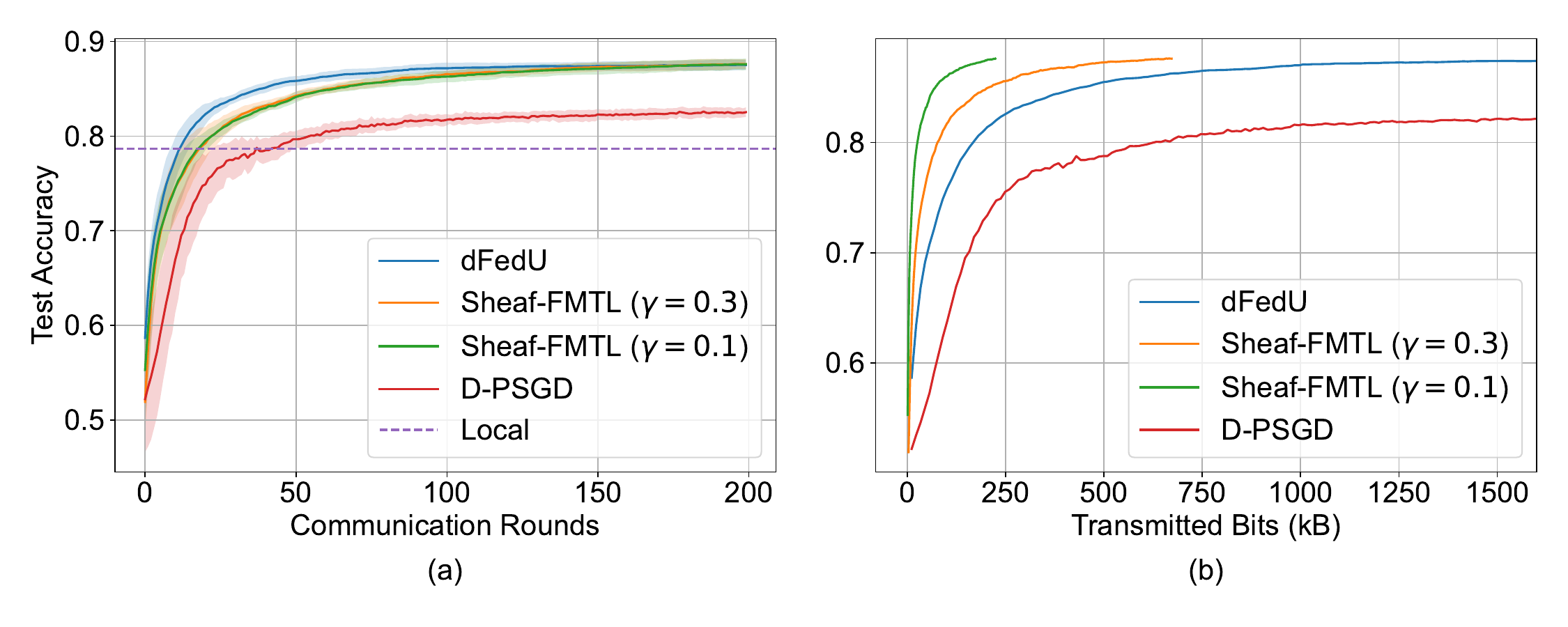}
\end{subfigure}%
\hfill
\begin{subfigure}[b]{\textwidth}
  \centering
  \includegraphics[scale=0.3]{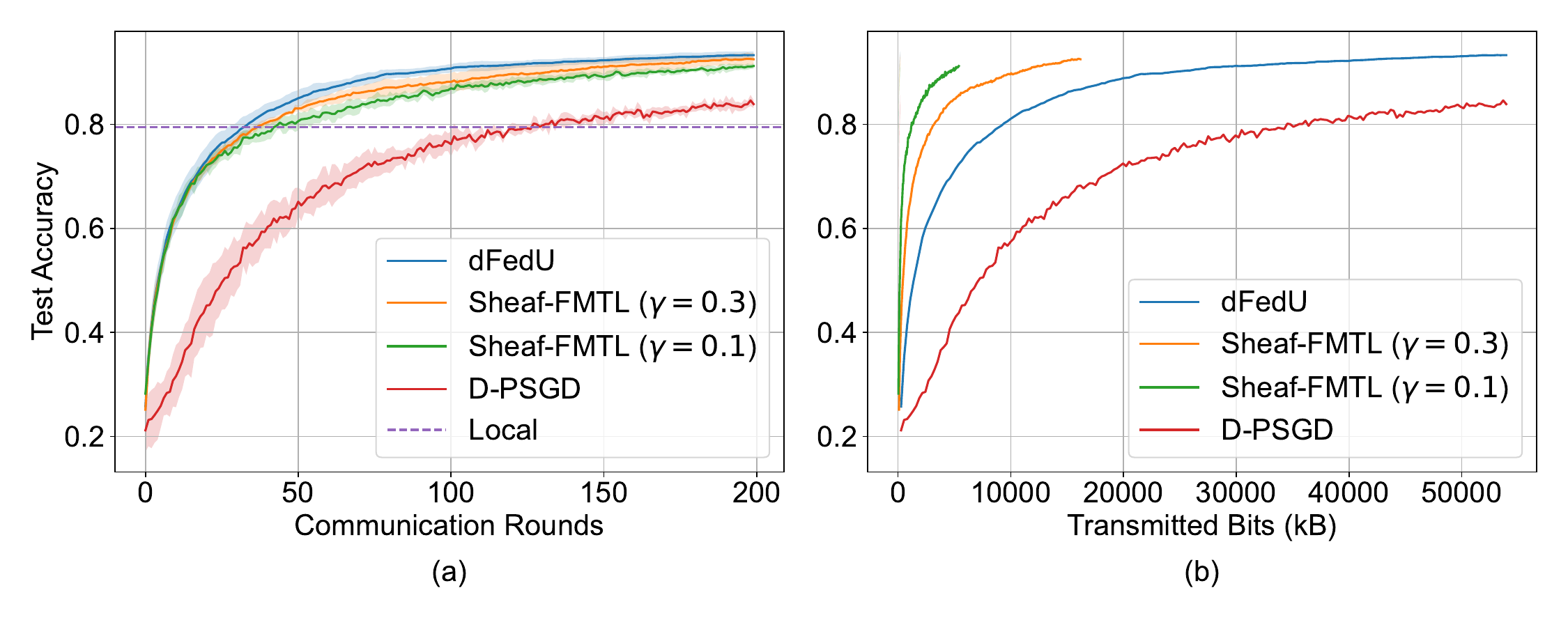}
\end{subfigure}%
\hfill
\begin{subfigure}[b]{\textwidth}
  \centering
  \includegraphics[scale=0.3]{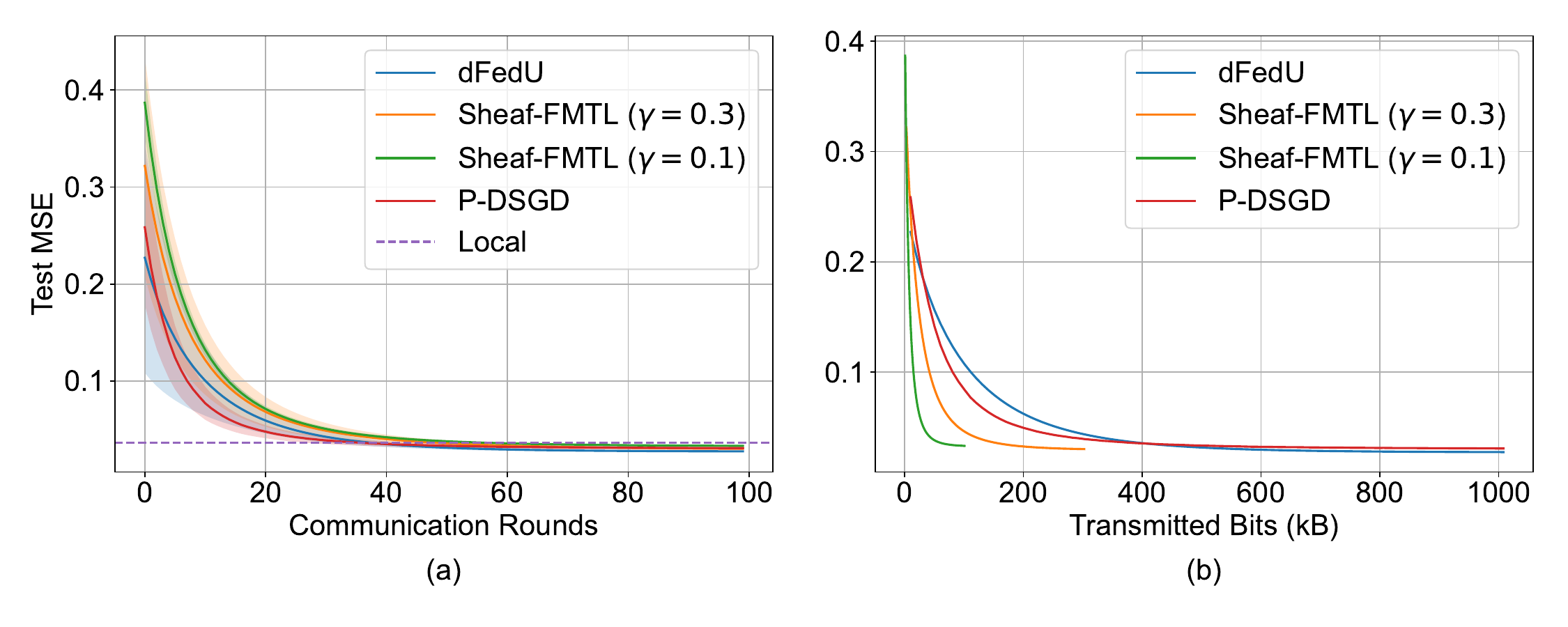}
\end{subfigure}%
\hfill
\begin{subfigure}[b]{\textwidth}
  \centering
  \includegraphics[scale=0.3]{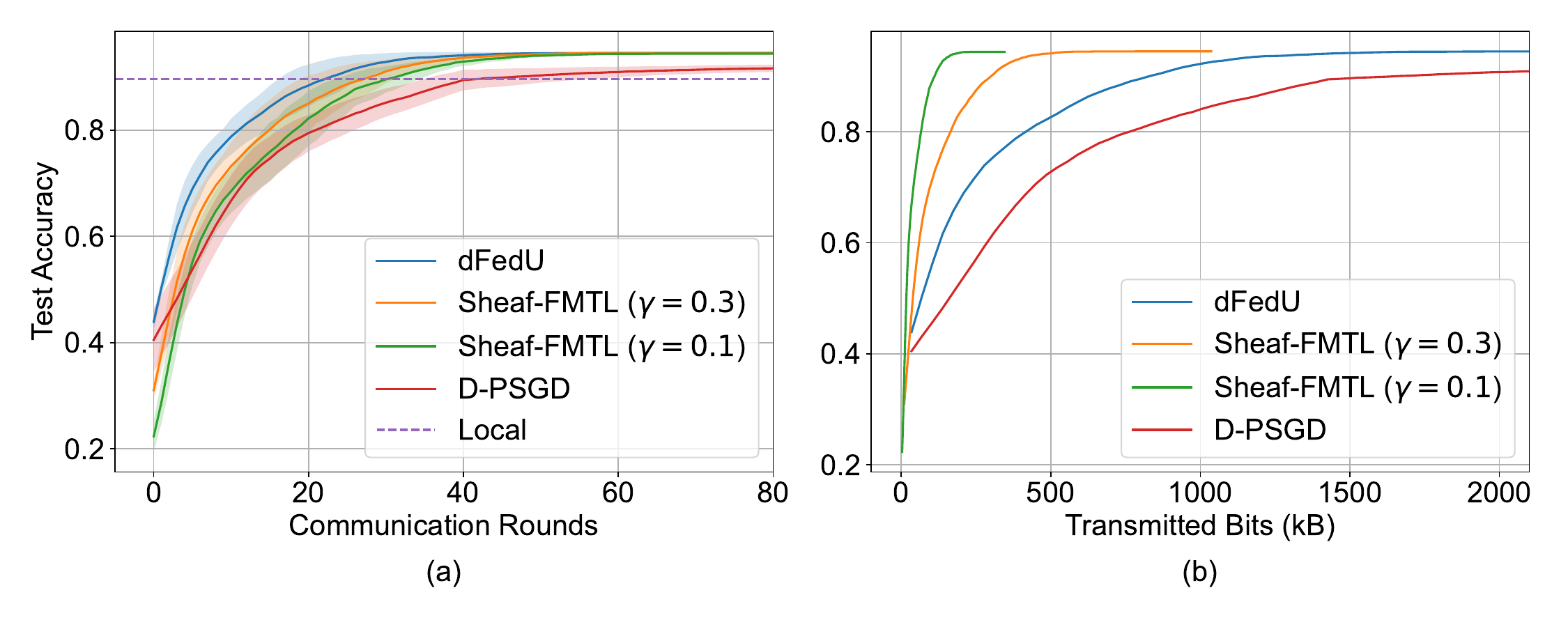}
\end{subfigure}%
\caption{Test/MSE accuracy as a function of the number of communication rounds and the number of transmitted bits for the Vehicle dataset (first row), the HAR dataset (second row), the School dataset (third row), and the GLEAM dataset (bottom).}
\label{vehicle_gleam_datasets}
\end{figure}

\end{document}